\documentclass[letterpaper]{article} 
\usepackage{aaai2027}  
\usepackage[hyphens]{url}  

\usepackage{natbib}  
\usepackage{caption} 
\usepackage{algorithm}
\usepackage{algorithmic}

\usepackage{booktabs, multirow, makecell, graphicx}

\usepackage{subcaption}    

\usepackage{amsfonts,amsmath,amsthm}

\newtheorem{proposition}{Proposition}
\newtheorem{theorem}{Theorem}
\newtheorem{lemma}{Lemma}
\usepackage{makecell}
\usepackage[table]{xcolor}
\usepackage{newfloat}
\usepackage{listings}
\DeclareCaptionStyle{ruled}{labelfont=normalfont,labelsep=colon,strut=off} 
\floatstyle{ruled}
\newfloat{listing}{tb}{lst}{}
\floatname{listing}{Listing}
\title{Learning to Difference: Adaptive Reversible Differencing (AdaRDiff) for Time Series Forecasting}

\author{
    Morad Laglil\textsuperscript{\rm 1,2},
    Younes Hlal\textsuperscript{\rm 1},
    Marouane El Hadari\textsuperscript{\rm 1},
    Emilie Devijver\textsuperscript{\rm 1},
    Eric Gaussier\textsuperscript{\rm 1}
}

\affiliations{
    \textsuperscript{\rm 1}Univ. Grenoble Alpes, CNRS, Grenoble INP, LIG, 38000 Grenoble, France\\
    \textsuperscript{\rm 2}Savoye, 21600 Longvic, France\\
    \{morad.laglil, emilie.devijver, eric.gaussier\}@univ-grenoble-alpes.fr,\\
    younes.hlal@etu.univ-grenoble-alpes.fr,\\
    marouane.el-hadari@grenoble-inp.org
}

\nocopyright

\begin{document}

\maketitle

\begin{abstract}
Reliable long-horizon time series forecasting is an important yet difficult problem. Trends and seasonality introduce complex temporal structure that challenges learning-based forecasting models. Differencing, which subtracts nearby past values to remove such structure, is the classical remedy, but its reliance on hand-picked orders and periods has kept it largely absent from recent deep architectures. We propose \textbf{\underline{Ada}}ptive \textbf{\underline{R}}eversible \textbf{\underline{Diff}}erencing \textbf{(AdaRDiff)}, a generalized differencing approach that uses learnable weights to simplify the series through weighted differencing with previous time instants. This yields stabilized residuals on which forecasting is performed, after which the removed components are restored autoregressively to reconstruct the forecast, capturing trend and seasonality jointly through a single operator. This reconstruction admits a closed-form convolutional expression, which parallelizes on GPU and yields up to $33.7\times$ speedup over the naive recurrence. 
We furthermore rely on a two-phase training schedule that separates temporal structure discovery from reconstruction learning, as suggested by a theoretical analysis of the gradient when using a linear forecasting model.
AdaRDiff attains state-of-the-art forecast accuracy across eight benchmarks spanning electricity, weather, traffic, and energy, at negligible parameter cost. Furthermore, it is designed as a plug-and-play module: integrating AdaRDiff improves eight diverse backbones, from linear models to Transformers, in the large majority of cases, by up to $25.9\%$ with a linear backbone and $18.3\%$ with iTransformer.
\end{abstract}

\section{Introduction}

Time series forecasting is fundamental to many real-world applications,
such as traffic flow prediction, product sales forecasting, and energy
consumption management, where accurate predictions support proactive
decision-making. Forecasting over long horizons, commonly referred to as
Long-term Time Series Forecasting (LTSF) \cite{informer}, is particularly
valuable and has accordingly drawn sustained attention. In all these
settings, forecasting quality depends on how well a model extracts and
exploits the temporal patterns present in the historical window
\cite{sparse}.

Among these patterns, trend and seasonality dominate: real-world series
combine long-term drifts and recurring periodic cycles into most of their
structured variation \cite{rb1990stl}, and these components are precisely
what makes forecasting difficult for learning-based models.
\citeauthor{NNFSTS}~\cite{NNFSTS} showed empirically that feedforward
networks struggle to capture trend and seasonal patterns from raw series,
whereas detrending and deseasonalizing the data beforehand substantially
reduces forecasting error. The classical tool for this is
\emph{differencing}: replacing each observation by its difference with a
previous value removes low-order trends (\textit{$1^{\text{st}}$-order
differencing}) or periodic components (\textit{seasonal differencing}),
leaving a simpler, more regular residual for the model to predict
\cite{OTexts}. Isolating such a residual is a powerful lever, and one that
architectural sophistication alone may not supply.

Such preprocessing is fundamental to classical statistical models, including ARIMA and SARIMA \cite{box1976time}, which have guided time series analysis for decades. Despite its proven effectiveness in traditional time series analysis, this principle has been largely overlooked in modern deep learning–based forecasting models. We believe this is due to a key limitation of classical differencing: the
scheme is rigid and must be specified by hand, including the orders, seasonal periods,
and their coefficients, which demands data exploration and domain expertise
and does not fit end-to-end training. AdaRDiff instead learns the scheme
directly from data, discovering the relevant lags and their weights rather
than requiring them to be fixed in advance.
Recent deep architectures have begun to reintroduce related
preprocessing through time series decomposition, as in DLinear \cite{dlinear} and CycleNet~\cite{lin2024cyclenet}, but these methods
target a fixed form of structure and do not adaptively capture the range of
trends and periodicities encountered across datasets.

This motivates an adaptive, data-driven form of differencing that captures
multiple seasonalities and trends jointly, and can be paired with any
forecasting model. We propose here such a framework, \textbf{Adaptive Reversible
Differencing (AdaRDiff)}. AdaRDiff is model-agnostic: rather than a
standalone forecaster, it is a differencing module that wraps around a
forecasting backbone, simplifying the series before the backbone predicts
and reconstructing the forecast afterward. It generalizes classical
differencing by replacing the hand-set coefficients with learnable,
channel-wise \emph{differencing weights}: for each time step, it forms a
weighted combination of past observations within a fixed window and
subtracts it from the current value, so that the model learns a
differencing scheme tailored to the data. The backbone then operates on the
resulting simplified residual. 
As AdaRDiff is \emph{reversible}, the forecast is then reconstructed
\emph{autoregressively}, applying the learned weights step by step to the
backbone's output so that each predicted value builds on the previously
reconstructed ones, recovering the forecast in the original space and
restoring the trend and seasonal structure that differencing had removed.
The learned differencing weights are moreover interpretable, as their
magnitudes at each lag reveal the dominant periods and 
dependencies the model has identified. 
AdaRDiff lets even a
lightweight Linear or MLP backbone reach state-of-the-art accuracy across
benchmarks spanning multiple domains.
Our main contributions are as follows:

\begin{itemize}
  \item We introduce AdaRDiff, a plug-and-play differencing module whose
  learnable weights capture trend and seasonality jointly and reversibly,
  recovering predictions in the original space while training end-to-end
  with any forecasting backbone.
\item We {theoretically} analyze the training dynamics of AdaRDiff with a linear backbone
  through the gradient with respect to its parameters, revealing a tension
  between the differencing and reconstruction pathways that motivates a
  two-phase training schedule separating temporal structure discovery from
  reconstruction learning.
  \item We evaluate AdaRDiff through extensive experiments on eight
  benchmark datasets and eight forecasting backbones, demonstrating its
  effectiveness.
\end{itemize}

\section{Related Work}
\label{sec:rel-work}

LTSF has a long statistical history, from exponential smoothing to the
Box--Jenkins ARIMA/SARIMA family \cite{box1976time}, built on
well-established statistical principles. We build on these but compare
against recent deep-learning approaches, grouped into lightweight linear/MLP
and Transformer-based models. Orthogonal to the choice of forecaster, reversible instance normalization
(RevIN) \cite{revin} has become a standard component of modern deep
forecasting architectures: it normalizes each input window and restores its
statistics at the output, mitigating distribution shift between training and
test data. AdaRDiff is fully compatible with RevIN and employs it as an
optional component.

\paragraph{Linear and MLP models.}
Lightweight linear- and MLP-based models often match or surpass far
larger architectures on LTSF at a fraction of their cost. DLinear
\cite{dlinear} decomposes the series into trend and seasonal parts by
moving average, and then forecasts each part with a linear model, while TimeMixer
\cite{wang2023timemixer} mixes multi-scale components through MLP blocks.
A second line exploits periodicity and low-rank structure to simplify the series and shrink the
model: CycleNet \cite{lin2024cyclenet} learns recurrent cycles and
subtracts them before forecasting the residual; SparseTSF
\cite{sparse} decouples periodicity from trend by downsampling
into cross-period subsequences, forecasting with under a thousand
parameters; TimeBase \cite{pmlr-v267-huang25az} extracts core basis
components and forecasts at the segment level rather than point by point;
and MixLinear \cite{ma2026mixlinear} combines segment-based trend
extraction in the time domain with low-rank spectral filtering in the
frequency domain, reaching competitive accuracy with roughly a hundred parameters. 
However those methods capture only a single fixed form of the input structure in isolation. In contrast, AdaRDiff, learns a single differencing operator that jointly captures trend and multiple seasonalities.  

\paragraph{Transformer-based models.}
The Transformer \cite{transformer}, effective at modeling long-range
dependencies, has been widely adapted to LTSF. Informer \cite{informer}
reduces attention complexity for long sequences, while FEDformer
\cite{fedformer} combines seasonal--trend decomposition with
frequency-domain attention, iTransformer \cite{liu2024itransformer}
treats each series as a token to model multivariate correlations, and
PatchTST \cite{patchtst} segments each series into subseries-level patches and applies attention
under channel independence. TQNet \cite{lin2025TQNet} uses periodically
shifted learnable queries in a single-layer attention module to capture
global inter-variable correlations. These models learn temporal or
cross-variate structure implicitly within attention. AdaRDiff instead
handles per-series temporal structure explicitly, in a model-agnostic
differencing module that can wrap a Transformer as readily as a linear
model.

A separate line of work develops time series foundation models
\cite{ansari2024chronos,10.5555/3692070.3692474}, pretrained on large corpora for
zero- and few-shot transfer. We do not compare against them: their
pretraining corpora often overlap with standard benchmarks, which precludes
a fair comparison with fully-supervised methods, a data-leakage issue
documented in prior work~\cite{laglil2026foundationmodelsfinetuningnew}.


\section{AdaRDiff: Adaptive Reversible Differencing}
\label{sec:method}

\paragraph{Problem setup.}
Given a look-back window of $L$ observations across $C$ channels, the task is
to forecast the next $H$ steps. Our method processes each channel
independently, so we present the univariate case ($C=1$) throughout: the
input is $\mathbf{x}=(x_1,\ldots,x_L)$, the ground-truth target window
$\mathbf{x}_{\mathrm{f}}=(x_{L+1},\ldots,x_{L+H})$, and the forecast
$\hat{\mathbf{x}}_{\mathrm{f}}=(\hat{x}_{L+1},\ldots,\hat{x}_{L+H})$.

\subsection{Adaptive Differencing and Reconstruction}
\label{sec:diff}
AdaRDiff is a model-agnostic, reversible framework for time series
forecasting.
It operates in three stages, summarized in Figure~\ref{fig:overview}:
\emph{(i)}~a learnable, channel-wise differencing operator
transforms the input into residuals;
\emph{(ii)}~a forecasting backbone maps past residuals to predicted
future residuals; and
\emph{(iii)}~an autoregressive reconstruction recovers the forecast in the original space.

\paragraph{Channel-Wise Differencing.}
Given $\mathbf{x}=(x_1,\ldots,x_L)$ and a window size~$P$, we learn
differencing weights $\boldsymbol{\delta} = (\delta_1,\ldots,\delta_P) \in \mathbb{R}^P$
and form the residuals $\mathbf{z}=(z_1,\ldots,z_{L-1})\in\mathbb{R}^{L-1}$ by
\begin{equation}
z_{t-1}
\;=\;
x_t - \sum_{j=1}^{P} \delta_j\, x_{t-j},
\qquad t \in \{2,\ldots,L\},
\label{eq:diff}
\end{equation}
In practice, the series is left-padded by repeating~$x_1$ a total of
$P{-}1$ times, i.e.\ $x_t:=x_1$ for $t\le1$, so that differencing applies
from $t=2$.
Since channels may exhibit heterogeneous dynamics, a distinct
$\boldsymbol{\delta}$ is learned per channel. This subsumes classical schemes: $\delta_1{=}1$ gives
first-order differencing, $\delta_1{=}2,\delta_2{=}{-}1$ second-order, and
$\delta_\ell{=}1$ seasonal differencing at lag~$\ell$ (all other coefficients
zero); combining lags lets a single operator remove an affine trend and a
periodic component at once.


\paragraph{Backbone.}
\label{sec:backbone}

The backbone $f_\theta$ maps past residuals to predicted future
residuals:
\begin{equation}
\hat{\mathbf{z}}
\;=\;
f_\theta(\mathbf{z})
\;\in\; \mathbb{R}^H.
\label{eq:backbone}
\end{equation}
We primarily use a linear model
($\hat{\mathbf{z}} = W_b\,\mathbf{z} + \mathbf{b}$,
$W_b \in \mathbb{R}^{H \times (L-1)}$)
and a two-layer MLP but more complex backbones are evaluated in
the ablation study (Section~\ref{sec:ablation}).

\paragraph{Autoregressive Reconstruction.} Given the predicted future residuals $\hat{\mathbf{z}}\in\mathbb{R}^H$,
the forecast in the original space is recovered by inverting
\eqref{eq:diff} through a linear recurrence: $\hat{x}_{L+1-j}:=x_{L+1-j}$ for
$j=1,\ldots,P$, and the forecast is recursively defined by
\begin{equation}
\hat{x}_{L+k}
=\hat{z}_k+\sum_{j=1}^{P}\delta_j\,\hat{x}_{L+k-j},
\qquad k=1,\ldots,H.
\label{eq:reverse}
\end{equation}

Computing $\hat{\mathbf{x}}_{\mathrm{f}}$ directly through the recursive application of Eq.~\eqref{eq:reverse} prevents parallelization across the horizon. We thus recast it as a convolution, amenable to parallelization. Let $\mathbf{c}=(c_1,\ldots,c_H)$ collect the
initial-condition terms, $c_k=\sum_{j=k}^{P}\delta_j\,x_{L+k-j}$ for
$k\le P$ and $c_k=0$ otherwise, and let $\mathbf{h}$ be the linear recurrence sequence \cite{elaydi2005difference} based on the same recurrence as the reconstruction, with $h_n=0$ for $n<0$, $h_0=1$, and
\begin{equation}
h_n = \sum_{j=1}^{P}\delta_j\,h_{n-j}, \qquad n\geq 1.
\label{eq:recurrence}
\end{equation}

\begin{proposition}
\label{prop:convolution}
The forecast~\eqref{eq:reverse} satisfies
\begin{equation}
\hat{x}_{L+k}=\bigl(\mathbf{h}*(\hat{\mathbf{z}}+\mathbf{c})\bigr)_k
=\sum_{m=0}^{k-1} h_m\,(\hat{z}_{k-m}+c_{k-m}).
\label{eq:convolution}
\end{equation}
\end{proposition}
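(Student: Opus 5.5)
Set $y_k:=\hat{x}_{L+k}$ for $k=1,\ldots,H$ and $y_k:=0$ for $k\le 0$. The idea is to remove the initial conditions by moving them into the forcing term. Then $y$ satisfies the recurrence \eqref{eq:recurrence} with a zero initial state, and the standard Green's-function (impulse response) argument for linear recurrences \cite{elaydi2005difference} gives the convolution.

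\textbf{Step 1: rewrite the recurrence with zero initial state.} Fix $k\in\{1,\ldots,H\}$ and split the sum in \eqref{eq:reverse} at $j=k$:
\begin{itemize}
\item terms with $j<k$ involve only forecast values $\hat{x}_{L+k-j}=y_{k-j}$;
\item terms with $j\ge k$ involve only observed values $\hat{x}_{L+k-j}=x_{L+k-j}$, since then $L+k-j\le L$.
\end{itemize}
The second group is exactly $c_k=\sum_{j=k}^{P}\delta_j x_{L+k-j}$ when $k\le P$, and it is empty, hence zero, when $k>P$. With the convention $y_n=0$ for $n\le 0$, this gives
\begin{equation*}
y_k=\sum_{j=1}^{P}\delta_j\,y_{k-j}+u_k,\qquad u_k:=\hat{z}_k+c_k,\qquad k=1,\ldots,H.
\end{equation*}
This step is the only place where care is needed. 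I must check the index boundary ($j\ge k$ versus $j\le k-1$), the definition of $c_k$ for $k>P$, and that the padding convention for $x_t$ with $t\le 1$ plays no role beyond supplying values of $x$.

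\textbf{Step 2: induction on $k$.} I prove $y_k=\sum_{m=0}^{k-1}h_m u_{k-m}$ by strong induction.

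\emph{Base case.} For $k=1$ the sum in Step 1 vanishes, so $y_1=u_1=h_0u_1$.

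\emph{Inductive step.} Assume the formula for all indices below $k$. Substituting it into Step 1 gives
\begin{equation*}
y_k=u_k+\sum_{j=1}^{P}\delta_j\sum_{m=0}^{k-j-1}h_m\,u_{k-j-m}.
\end{equation*}
Here terms with $k-j\le 0$ are empty, which is consistent with $y_n=0$. Reindex by $n=m+j$ and swap the sums:
\begin{equation*}
y_k=u_k+\sum_{n=1}^{k-1}u_{k-n}\sum_{j=1}^{\min(P,n)}\delta_j\,h_{n-j}.
\end{equation*}
Because $h_{n-j}=0$ for $n-j<0$, the inner sum equals $\sum_{j=1}^{P}\delta_j h_{n-j}=h_n$ by \eqref{eq:recurrence}. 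Hence $y_k=h_0u_k+\sum_{n=1}^{k-1}h_n u_{k-n}$, which is \eqref{eq:convolution}.

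An alternative is a generating-function argument. Multiplying Step 1 by $s^k$ and summing gives $Y(s)(1-\sum_j\delta_j s^j)=U(s)$, while $H(s)=1/(1-\sum_j\delta_j s^j)$, so $Y=HU$. This also shows why the truncated convolution is exact. I would present the induction because it avoids any issue of formal power series with finite $H$, and I would mention the generating-function view only as intuition.
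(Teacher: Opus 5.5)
Your proof is correct and follows the paper's own argument: strong induction on $k$, splitting the reconstruction sum at $j=k$ so that the observed-value part becomes $c_k$, then reindexing and collapsing the inner sum to $h_n$ via the recurrence together with $h_n=0$ for $n<0$. The only difference is presentational: you fold the initial conditions into a zero-initial-state recurrence for $y_k$ once, before the induction, whereas the paper does this split inside the inductive step.
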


In matrix notation, Eq.~\eqref{eq:convolution} can be written as  $\hat{\mathbf{x}}_{\mathrm{f}}=T_\mathbf{h}(\hat{\mathbf{z}}+\mathbf{c})$,
where $T_\mathbf{h}\in\mathbb{R}^{H\times H}$ is the lower-triangular
Toeplitz matrix with $(T_\mathbf{h})_{n,k}=h_{n-k}$. Both terms can be 
parallelized on GPUs: $\mathbf{h}$ is obtained through repeated squaring ~\cite{knuth1997taocp2} of the companion-matrix\footnote{The companion matrix
$\left(\begin{smallmatrix}
\delta_1 & \cdots & \delta_{P-1} & \delta_P\\
1 & & & 0\\
 & \ddots & & \vdots\\
 & & 1 & 0
\end{smallmatrix}\right)$.}
 of the recurrence in Eq.~\eqref{eq:recurrence}, requiring $O(\log H)$ sequential steps---against $O(H)$ of the naive recurrence---and the Toeplitz product is realized as a  one-dimensional convolution across horizon and batch, as detailed in Appendix B.
 
 \begin{figure}[t]
  \centering
  \includegraphics[width=1\linewidth]{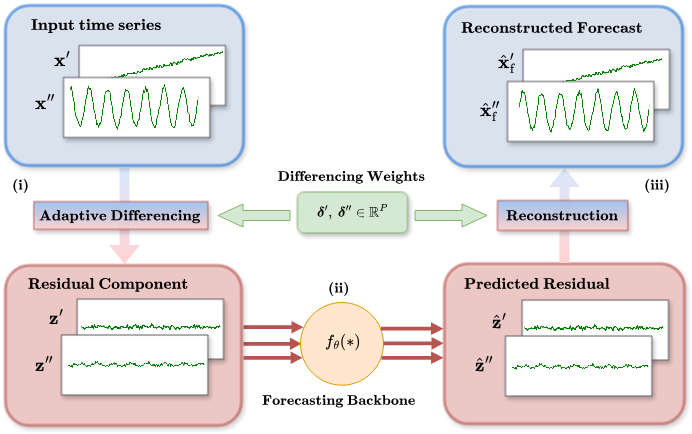}
  \caption{%
    \textbf{Overview of AdaRDiff.}
    \textbf{(i)}~A learnable differencing operator transforms the
    input into residuals.
    \textbf{(ii)}~A forecasting backbone predicts future residuals.
    \textbf{(iii)}~The forecast is reconstructed autoregressively in
    the original space.}
  \label{fig:overview}
\end{figure}
\paragraph{Training.}
The full model, with parameters $(\boldsymbol{\delta},\theta)$, is trained
end-to-end, the target objective being the forecast loss
\begin{equation}
\mathcal{L}_{\mathrm{forecast}}
=\tfrac12\|\hat{\mathbf{x}}_{\mathrm{f}}-\mathbf{x}_{\mathrm{f}}\|^2 ,
\label{eq:forecast_loss}
\end{equation}
Since the reconstruction~\eqref{eq:convolution}
depends polynomially on $\boldsymbol{\delta}$, this objective is non-convex
and we therefore optimize it by stochastic
gradient descent. 

\subsection{Gradient Analysis with Linear Backbone}
\label{sec:gradient_analysis}

We study the training dynamics of AdaRDiff through the gradient with
respect to its parameters, taking the linear backbone as the
analytically tractable case. Throughout, $\mathcal{L}:=\mathcal{L}_{\mathrm{forecast}}$.

\paragraph{Backbone gradient.}
With $e:=\hat{\mathbf{x}}_{\mathrm{f}}-\mathbf{x}_{\mathrm{f}}$, the
linear-backbone gradient is
\begin{equation}
\nabla_{W_b}\mathcal{L}=(T_\mathbf{h}^\top e)\,\mathbf{z}^\top,
\qquad
(T_\mathbf{h}^\top e)_t=\sum_{k=t}^{H} h_{k-t}\,e_k,
\label{eq:grad_Wb}
\end{equation}
so the effective error at position $t$ is a backward convolution of $e$
with $\mathbf{h}$. Without reconstruction ($T_\mathbf{h}=I$) this
collapses to $e\,\mathbf{z}^\top$, where each position sees only its own
error; reconstruction couples positions through $\mathbf{h}$, whose
growth is governed by the spectral radius $\rho$ of the companion matrix.

\begin{proposition}[Growth of $h_n$ and Error Propagation]
\label{prop:stability}
For $(h_n)$ as in Eq.~\eqref{eq:recurrence}\footnote{The case $\rho=1$ occurs on a measure-zero set of parameters and
is omitted here for readability; it is treated in
Appendix~A.\label{fn:rho=1}}:
\begin{enumerate}
 \item If $\rho<1$: $\forall\rho'\in(\rho,1),\ \exists C>0,\ 
|h_n|\le C\rho'^{\,n}$, and $|(T_\mathbf{h}^\top e)_t|=O(\|e\|)$; 
\item If $\rho>1$, $\sup_n|h_n|=\infty$. 
\end{enumerate}
\end{proposition}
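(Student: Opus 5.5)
The approach is to write the sequence $(h_n)$ as a matrix power. Let $A$ be the companion matrix of the recurrence~\eqref{eq:recurrence} and set $H_n:=(h_n,h_{n-1},\ldots,h_{n-P+1})^\top$, with $h_m=0$ for $m<0$. Then $H_0=e_1$ and $H_n=A H_{n-1}$, so
\[
H_n=A^n e_1,\qquad h_n=e_1^\top A^n e_1 .
\]
Both items then reduce to statements about the growth of $\|A^n\|$ and of one particular matrix entry.

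\textbf{Case $\rho<1$.} Fix $\rho'\in(\rho,1)$. By Gelfand's formula, $\|A^n\|^{1/n}\to\rho$, so there is an $N$ with $\|A^n\|\le\rho'^{\,n}$ for all $n\ge N$. Set
\[
C:=\max\Bigl(1,\ \max_{n<N}\|A^n\|/\rho'^{\,n}\Bigr),
\]
which gives $|h_n|\le\|A^n\|\le C\rho'^{\,n}$ for every $n$. An alternative route is the Jordan form, where $\|A^n\|\le c\,n^{P-1}\rho^n$, and the polynomial factor is absorbed into $(\rho'/\rho)^n$. For the error bound, Cauchy--Schwarz gives
\[
|(T_\mathbf{h}^\top e)_t|=\Bigl|\sum_{k=t}^H h_{k-t}e_k\Bigr|\le\Bigl(\sum_{m\ge0}h_m^2\Bigr)^{1/2}\|e\|\le\frac{C}{\sqrt{1-\rho'^2}}\,\|e\| .
\]
This constant is independent of $H$ and of $t$, which is the content of the $O(\|e\|)$ claim.

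\textbf{Case $\rho>1$.} This is the main obstacle. The problem is that $e_1$ might in principle avoid the dominant eigendirection, and $\|A^n\|\to\infty$ alone does not force $h_n$ to be unbounded. I would use the structure of the companion matrix. The generating function of the sequence is
\[
\sum_n h_n s^n=\frac{1}{1-\sum_j\delta_j s^j}=\frac{1}{s^P q(1/s)},
\]
where $q(\lambda)=\lambda^P-\sum_j\delta_j\lambda^{P-j}$ is the characteristic polynomial of $A$. This rational function has numerator $1$, so there is no cancellation, and every root $\lambda$ of $q$ produces a pole at $s=1/\lambda$. Take a root with $|\lambda|=\rho>1$. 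The pole at $1/\lambda$ lies strictly inside the unit disk, so the radius of convergence of the series is $1/\rho<1$. By Cauchy--Hadamard, $\limsup|h_n|^{1/n}=\rho>1$, which gives $\sup_n|h_n|=\infty$.

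Equivalently, one can use the closed form from partial fractions,
\[
h_n=\sum_i p_i(n)\lambda_i^n ,
\]
where the $p_i$ are polynomials. The generating-function argument shows that the coefficient attached to a maximal-modulus root is nonzero, and that is exactly the fact needed here.
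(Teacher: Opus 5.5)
Your proof is correct, but for $\rho>1$ (and more mildly for $\rho<1$) it takes a different route from the paper.

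\emph{Case $\rho<1$.} The paper works from the closed form $h_n=\sum_i p_i(n)\lambda_i^{\,n}$. It bounds each mode by $c_i n^{m_i-1}\rho^n$, then controls $(T_\mathbf{h}^\top e)_t$ using $\ell^1$-summability of $\mathbf{h}$ together with $\max_k|e_k|\le\|e\|$. You use Gelfand's formula on the companion matrix and an $\ell^2$ Cauchy--Schwarz bound. Both give a constant independent of $t$ and $H$.

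\emph{Case $\rho>1$.} You share the paper's key observation that $\mathcal{H}(x)=1/D(x)$ has constant numerator, but you use it differently. The paper uses it to show that the top partial-fraction coefficient $A_{i,m_i}$ of every root is nonzero. It then needs three further steps to reach $|h_n|\ge\frac{\epsilon}{2}n^{d}\rho^{n}$ on an infinite set:
\begin{itemize}
\item Lemma~\ref{lem:expsum}, a Ces\`aro-average argument, to rule out cancellation among several dominant roots of equal modulus;
\item a bound showing the subdominant modes are negligible;
\item a bound showing the lower-order polynomial terms are negligible.
\end{itemize}
You instead read off the radius of convergence $1/\rho$ from the nearest pole and apply Cauchy--Hadamard. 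This handles the equal-modulus cancellation issue automatically, is much shorter, and does not need $\delta_P\neq0$.

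\emph{What the paper's heavier argument buys.} It gives the polynomial factor $n^{d}$ in the lower bound. That factor is what settles the boundary case $\rho=1$ with a repeated unit-modulus root, where $\limsup_n|h_n|^{1/n}=1$ is inconclusive. The same argument is reused with doubled multiplicities in Proposition~\ref{prop:sensitivity}. Since the statement as posed excludes $\rho=1$, your argument suffices.

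\emph{One caveat on your closing remark.} On the partial-fraction route, nonzero coefficients at the maximal-modulus roots are not by themselves \emph{exactly the fact needed} when several roots share modulus $\rho$. You would also need that $\sum_i a_i\mu_i^{\,n}$ does not tend to $0$, which is precisely the role of Lemma~\ref{lem:expsum}.
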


The coupling is thus localized when $\rho<1$: each position receives
feedback dominated by nearby errors, enabling position-specific learning, and the effective error is bounded independently of the horizon. This uniform control is lost as $\rho$ exceeds $1$, in which case $\mathbf{h}$ is no longer bounded.

\paragraph{{Differencing }Gradient.}
The gradient of $\mathcal{L}$ in $\delta_j$ decomposes into three terms,
one per pathway through which $\boldsymbol{\delta}$ acts on the forecast:
\begin{equation}
\frac{\partial\mathcal{L}}{\partial\delta_j}
=
\underbrace{\sum_{n=0}^{H-1} \frac{\partial h_n}{\partial\delta_j}\,\phi_{ev}(n)}_{\text{I: reconstruction}}
-\underbrace{\mathbf{p}^\top \mathbf{x}^{(j)}_{\scriptscriptstyle 2:L}}_{\text{II: differencing}}
+\underbrace{\sum_{t=1}^{j}(T_\mathbf{h}^\top e)_t\,x_{L+t-j}}_{\text{III: boundary}},
\label{eq:grad_delta}
\end{equation}
where $\phi_{ev}(n)=\sum_{k=n+1}^{H} e_k\,v_{k-n}$ with
$v_m=\hat{z}_m+c_m$, $\mathbf{p}^\top=(T_\mathbf{h}^\top e)^\top W_b$, and
$\mathbf{x}^{(j)}_{\scriptscriptstyle 2:L}\in\mathbb{R}^{L-1}$ is the lag-$j$
shift of $\mathbf{x}_{\scriptscriptstyle 2:L}$, i.e.\
$(x^{(j)}_{\scriptscriptstyle 2:L})_s=x_{s+1-j}$.\footnote{We adopt the
left-padding convention of Eq.~\eqref{eq:diff} ($x_t:=x_1$ for $t\le 1$), so
that all shifted indices are well-defined.}

As the third term of Eq.~\eqref{eq:grad_delta} arises only because the first $P$ forecast steps are reconstructed
from the last observed context values. It is supported on at most $P$ of the
$H$ horizon positions, a vanishing fraction for long horizons; we therefore focus on Terms~I and~II of Eq.~\eqref{eq:grad_delta}.

\textbf{Term~I} is the only term involving the sensitivity of $\mathbf{h}$
to $\boldsymbol{\delta}$, whose behavior is again governed by $\rho$.

\begin{proposition}[Growth of $\partial h_n/\partial\delta_j$ and Term~I]
\label{prop:sensitivity}
For $j\in\{1,\ldots,P\}$, the sensitivity
$g_n^{(j)}:=\partial h_n/\partial\delta_j$ is the self-convolution
$g_n^{(j)}=(\mathbf{h}*\mathbf{h}^{(j)})_n$ with $h_n^{(j)}:=h_{n-j}$,
and satisfies:\footref{fn:rho=1}
\begin{enumerate}
\item If $\rho<1$: $\forall\rho'\in(\rho,1),\ \exists C_g>0,\ \forall n\ge0,\
|g_n^{(j)}|\le C_g\,\rho'^{\,n}$, and
$|\mathrm{Term~I}|=O(\|e\|\,\|\mathbf{v}\|)$; 
\item If $\rho>1$, $\sup_n|g_n^{(j)}|=\infty$. 
\end{enumerate}
\end{proposition}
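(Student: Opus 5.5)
We first establish the convolution identity by differentiating the recurrence \eqref{eq:recurrence} with respect to $\delta_j$. Since $h_n=\sum_{i=1}^P\delta_i h_{n-i}$ for $n\ge1$ and $h_n$ is independent of $\boldsymbol\delta$ for $n\le0$, the product rule gives
\[
g_n^{(j)}=\sum_{i=1}^{P}\delta_i\,g_{n-i}^{(j)}+h_{n-j},\qquad n\ge1,
\]
with $g_n^{(j)}=0$ for $n\le 0$. This is the same recurrence driven by the forcing term $h^{(j)}_n=h_{n-j}$, which vanishes for $n<j$. The sequence $\mathbf h$ is the impulse response of the recurrence: $h_0=1$ and the homogeneous recurrence holds for $n\ge1$. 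Hence the unique causal solution of a forced recurrence $y_n=\sum_i\delta_i y_{n-i}+u_n$ with $y_n=0$ for $n<0$ is $y=\mathbf h*u$. This is exactly the argument behind Proposition~\ref{prop:convolution}, with $u$ in place of $\hat{\mathbf z}+\mathbf c$. Since $g_0^{(j)}=0=h^{(j)}_0$ because $j\ge1$, we obtain $g_n^{(j)}=(\mathbf h*\mathbf h^{(j)})_n=\sum_{m=0}^{n-j}h_m h_{n-j-m}$. An equivalent route uses generating functions: $H(z)=1/(1-\sum_i\delta_i z^i)$, so $\partial H/\partial\delta_j=z^jH(z)^2$.

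\textbf{Case $\rho<1$.} Fix $\rho'\in(\rho,1)$ and pick $\rho''\in(\rho,\rho')$. Proposition~\ref{prop:stability} gives $|h_n|\le C\rho''^{\,n}$. Then
\[
|g_n^{(j)}|\le C^2\sum_{m=0}^{n-j}\rho''^{\,n-j}\le C^2(n+1)\rho''^{\,n-j}.
\]
Because $(n+1)(\rho''/\rho')^n$ is bounded, this is at most $C_g\rho'^{\,n}$. The intermediate $\rho''$ is what absorbs the polynomial factor coming from the double pole of $H^2$.

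For Term~I, write it as $\sum_{n}g_n^{(j)}\sum_{k>n}e_kv_{k-n}$. Bounding $|\phi_{ev}(n)|\le\|e\|\,\|\mathbf v\|$ by Cauchy--Schwarz gives $|\mathrm{Term~I}|\le\|e\|\|\mathbf v\|\sum_n|g_n^{(j)}|\le\|e\|\|\mathbf v\|\,C_g/(1-\rho')$, independent of $H$.

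\textbf{Case $\rho>1$.} This is the main obstacle, since the convolution $\mathbf h*\mathbf h^{(j)}$ could in principle have cancellations. The plan is to use generating functions. The series $G^{(j)}(z)=z^jH(z)^2$ has radius of convergence $1/\rho<1$: $H$ is rational with denominator $Q(z)=1-\sum_i\delta_iz^i$, whose roots are the reciprocals of the eigenvalues of the companion matrix. Because numerator and denominator of $H$ are coprime, the smallest root has modulus $1/\rho$ and is a genuine pole of $H$, hence a pole of order at least two of $z^jH^2$. If $\sup_n|g_n^{(j)}|<\infty$, then $G^{(j)}$ would converge on the open unit disk, which contains that pole. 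This is a contradiction, so $\sup_n|g_n^{(j)}|=\infty$.

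The same argument, applied to $H$ itself, is how I expect Proposition~\ref{prop:stability}(2) to be proved, so I would reuse it and simply note that squaring cannot cancel the pole.
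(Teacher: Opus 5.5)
Your proof is correct. The convolution identity and the case $\rho<1$ match the paper essentially line by line. The paper derives $\mathcal{G}^{(j)}=x^j\mathcal{H}^2$ by generating functions, which is your alternative route, and it uses the same intermediate $\rho''$ and the same Cauchy--Schwarz bound on $\phi_{ev}$.

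You genuinely differ only in the case $\rho>1$.

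\emph{The paper's route.} It writes $\mathcal{G}^{(j)}(x)=x^j\prod_i(1-\lambda_ix)^{-2m_i}$ and reruns the partial-fraction argument of Proposition~\ref{prop:stability}(3), showing that every dominant mode carries a nonzero top coefficient. It then invokes Lemma~\ref{lem:expsum}, a Ces\`aro-mean argument on unimodular exponential sums, to get $|g_n^{(j)}|\ge c\,n^{2m-1}\rho^n$ along an infinite set.

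\emph{Your route.} You use only the location of the nearest singularity. Since $Q(0)=1$, every root of $Q$ is nonzero, so the root $z_0=1/\lambda$ with $|\lambda|=\rho$ is not cancelled by $z^j$. Bounded coefficients would then make $F(z):=\sum_n g_n^{(j)}z^n$ holomorphic on the unit disk. To make the contradiction airtight in one line, note that $F(z)\,Q(z)^2=z^j$ holds as a formal power-series identity, hence as an identity of holomorphic functions on $|z|<1$. Evaluating at $z_0$ gives $0=z_0^j\neq0$.

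\emph{What each buys.} Your argument is much shorter and needs neither the lemma nor the assumption $\delta_P\neq0$ that the paper invokes in Step~1 to ensure $\lambda_i\neq0$. Through Cauchy--Hadamard it still gives $\limsup_n|g_n^{(j)}|^{1/n}=\rho$, i.e.\ exponential growth along a subsequence. The paper's heavier machinery gives the sharper rate $n^{2m-1}\rho^n$. The same machinery also covers the boundary case $\rho=1$ with a repeated unit root, treated in the appendix. There the radius of convergence is exactly $1$, so your bare argument would need a supplementary growth comparison, e.g.\ $|F(z)|\le M/(1-|z|)$ against a pole of order at least two.
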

Term~I is therefore bounded independently of the horizon only in the stable regime ($\rho < 1)$.
Since nothing in the objective prevents $\boldsymbol{\delta}$ from leaving
the stable regime, we introduce an optional reparameterization that makes
$\rho$ explicitly controllable, as stated in Theorem~\ref{thm:l1}.
\begin{theorem}
\label{thm:l1}
If $\|\boldsymbol{\delta}\|_1<1$ in Eq.~\eqref{eq:recurrence}, then $\rho<1$.
\end{theorem}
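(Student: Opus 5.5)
We want to show that every eigenvalue $\lambda$ of the companion matrix of the recurrence in Eq.~\eqref{eq:recurrence} satisfies $|\lambda|<1$ whenever $\|\boldsymbol{\delta}\|_1<1$. The plan is to work with the characteristic polynomial rather than the matrix itself. Expanding the determinant of $\lambda I$ minus the companion matrix along its first row (or checking that $(\lambda^{P-1},\ldots,\lambda,1)^\top$ is an eigenvector) shows that the eigenvalues are exactly the roots of
\[
p(\lambda)=\lambda^P-\sum_{j=1}^{P}\delta_j\,\lambda^{P-j}.
\]
So $\rho$ is the largest modulus of a root of $p$, and it suffices to prove that $p$ has no root with $|\lambda|\ge 1$.

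Suppose, for contradiction, that $p(\lambda)=0$ with $|\lambda|\ge 1$. Then $\lambda\neq 0$, and dividing by $\lambda^P$ gives $1=\sum_{j=1}^{P}\delta_j\lambda^{-j}$. The triangle inequality together with $|\lambda^{-j}|\le 1$ then yields
\[
1\le\sum_{j=1}^{P}|\delta_j|\,|\lambda|^{-j}\le\|\boldsymbol{\delta}\|_1<1,
\]
which is a contradiction. Hence every root satisfies $|\lambda|<1$. Since there are finitely many roots, their maximum modulus satisfies $\rho<1$.

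An alternative route avoids the characteristic polynomial altogether. One bounds $\rho$ by an induced norm and passes to a similar matrix whose $\infty$-norm is governed by $\|\boldsymbol{\delta}\|_1$. The polynomial argument is cleaner, so that is the one I would write.

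The only step that needs care is the identification of the characteristic polynomial of the companion matrix in the footnote. I would handle it with the eigenvector check. If $M\mathbf{u}=\lambda\mathbf{u}$ with $\mathbf{u}=(u_1,\ldots,u_P)^\top$, the subdiagonal rows force $u_{i+1}=u_i/\lambda$ for $\lambda\ne0$, with $u_1\neq0$. The first row then reads $\lambda u_1=\sum_j\delta_j u_1\lambda^{1-j}$, which is exactly $p(\lambda)=0$. This covers every eigenvalue with $|\lambda|\ge1$, which is the only case the contradiction argument uses. Everything else is the elementary estimate above.
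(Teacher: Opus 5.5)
Your proof is correct and is essentially the paper's: both suppose a root $\lambda$ of $\lambda^{P}-\sum_{j=1}^{P}\delta_j\lambda^{P-j}$ with $|\lambda|\ge1$ and reach a contradiction via the triangle inequality. The differences are cosmetic: you divide by $\lambda^{P}$ to get $1\le\|\boldsymbol{\delta}\|_1$, while the paper divides by $|\lambda|^{P-1}$ to get $|\lambda|\le\|\boldsymbol{\delta}\|_1$, and your eigenvector check linking the companion-matrix eigenvalues to these roots is a small justification the paper takes as standard.
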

Concretely, we $\ell_1$-normalize an unconstrained
$\tilde{\boldsymbol{\delta}}\in\mathbb{R}^P$ and rescale by a gain $\alpha$,
\begin{equation}
\delta_j^{\mathrm{eff}}
=\frac{\alpha\,\tilde{\delta}_j}{\|\tilde{\boldsymbol{\delta}}\|_1},
\label{eq:reparam}
\end{equation}
so that $\|\boldsymbol{\delta}^{\mathrm{eff}}\|_1=|\alpha|$: direction and
magnitude are decoupled, and $\alpha$ directly controls proximity to the
stability condition. Fixing $\alpha<1$ would enforce $\rho<1$ but imposes
the same magnitude on every channel, whereas channels may require different
amounts of differencing; we therefore learn $\alpha$ per channel. Note that we consider the reparametrization as an optional procedure, which may not be needed on all datasets and forecasting configurations.

\textbf{Term~II}, from the differencing pathway, does not involve the sensitivity $g_n^{(j)}$. As stated in Proposition~\ref{prop:autocov}, its expected value is a linear functional of the input autocovariance in which the lag $j$ enters only through data autocovariance, so the learning signal for
$\boldsymbol{\delta}$ is also shaped by the data's second-order temporal
structure---trend and periodicity---a property specific to this pathway.

\begin{proposition}[Autocovariance coupling]
\label{prop:autocov}
Let $(x_t)$ be zero-mean weakly stationary with autocovariance
$\gamma(k)=\mathbb{E}[x_t\,x_{t-k}]$. Then, for any fixed parameters
$(\boldsymbol{\delta},W_b,\mathbf{b})$, the differencing term of
Eq.~\eqref{eq:grad_delta} has expectation
\begin{equation}
\mathbb{E}_{\mathbf{x}}\bigl[\mathbf{p}^{\top}
\mathbf{x}^{(j)}_{\scriptscriptstyle 2:L}\bigr]
=\sum_{m=1-L}^{L+H-2}\Phi(m)\,\gamma(m+j),
\label{eq:autocov}
\end{equation}
where the weights $\Phi(m)$ depend on the parameters but not on the lag~$j$.
\end{proposition}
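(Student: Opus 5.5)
The plan is to write $\mathbf{p}$ explicitly as a linear function of the data window (input plus target), and then take expectations term by term. Fix $(\boldsymbol{\delta},W_b,\mathbf{b})$, so that $\mathbf{h}$, $T_\mathbf{h}$ and $W_b$ are deterministic. The residuals satisfy $\mathbf{z}=D\mathbf{x}$ for a fixed matrix $D$ given by Eq.~\eqref{eq:diff}, where the left-padding is absorbed into the first column. By Proposition~\ref{prop:convolution}, the forecast is $\hat{\mathbf{x}}_{\mathrm f}=T_\mathbf{h}(W_bD\mathbf{x}+\mathbf{b}+\mathbf{c})$, and $\mathbf{c}$ is itself linear in $(x_{L-P+1},\dots,x_L)$. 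The error $e=\hat{\mathbf{x}}_{\mathrm f}-\mathbf{x}_{\mathrm f}$ is therefore affine in the stacked vector $\mathbf{y}=(x_1,\dots,x_{L+H})$:
\[
e=A\mathbf{y}+\mathbf{a}, \qquad A=\bigl[\,T_\mathbf{h}(W_bD+C_0)\;\big|\;-I_H\,\bigr],
\]
where $C_0$ extracts $\mathbf{c}$ and $\mathbf{a}=T_\mathbf{h}\mathbf{b}$. It follows that $\mathbf{p}=W_b^\top T_\mathbf{h}^\top e=M\mathbf{y}+\mathbf{m}$, with $M=W_b^\top T_\mathbf{h}^\top A\in\mathbb{R}^{(L-1)\times(L+H)}$. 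Neither $M$ nor $\mathbf{m}$ depends on $j$.

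Next I expand the bilinear form:
\[
\mathbf{p}^\top\mathbf{x}^{(j)}_{2:L}=\sum_{s=1}^{L-1}\sum_{r=1}^{L+H}M_{s r}\,x_r\,x_{s+1-j}+\sum_s m_s x_{s+1-j}.
\]
The affine part has zero expectation because the process is zero-mean. The quadratic part has expectation
\[
\sum_{s,r}M_{sr}\,\gamma\bigl(r-(s+1-j)\bigr)=\sum_{s,r}M_{sr}\,\gamma\bigl((r-s-1)+j\bigr).
\]
Setting $m=r-s-1$ and grouping terms, I define
\[
\Phi(m)=\sum_{s,r:\,r-s-1=m}M_{sr}.
\]
This depends only on the parameters and not on $j$. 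With $s\in\{1,\dots,L-1\}$ and $r\in\{1,\dots,L+H\}$, the offset $m$ ranges from $1-(L-1)-1=1-L$ up to $L+H-2$, which matches the index range in Eq.~\eqref{eq:autocov}.

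The main obstacle is the left-padding convention, $x_t:=x_1$ for $t\le 1$. For $s+1-j\le 1$, the shifted entry is $x_1$ rather than $x_{s+1-j}$, so the true lag is $r-1$ instead of $r-s-1+j$. This breaks the clean dependence on $j$ only through the autocovariance argument. My first attempt would be to read $x_{s+1-j}$ literally as the stationary process value at a non-positive time, where stationarity makes $\gamma$ well-defined. Under that reading the derivation above goes through verbatim, with the padding used only to make indices well-defined as in the footnote. If the literal padded values must be used, I would instead state the result for $j$ small relative to $L$, or absorb the boundary indices $s\le j-1$ into a separate $j$-dependent correction that I flag explicitly. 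I expect the intended statement is the first reading. A secondary check is that $\mathbf{c}$ enters $M$ linearly with coefficients in $\boldsymbol{\delta}$ only, so it does not introduce any dependence on $j$.
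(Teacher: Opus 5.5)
Your proof is correct and follows the paper's route. You write $\mathbf{p}$ as an affine function of the data window, discard the affine part by the zero mean, turn the quadratic part into autocovariances, and regroup along the diagonals $m=r-s-1$ to get weights $\Phi(m)$ free of $j$. The paper phrases the middle step as $\operatorname{tr}(NE_j\Gamma)$, with a selection matrix $E_j$ and its $N$ the transpose of your $M$; that difference is cosmetic.

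The real difference is the boundary convention, which is the point you flagged.

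\emph{The paper's convention.} It sets $\mathbf{X}=(x_{1-P},\ldots,x_{L+H})$ and treats every pre-sample value, inside $\mathbf{z}$ as well as in the shift, as a genuine observation of the process. Then $\mathbf{p}^\top\mathbf{x}^{(j)}_{2:L}$ is exactly Term~II of one consistent model. The cost is that $\mathbf{p}$ depends on $x_{2-P},\ldots,x_0$. For $P\ge 2$ the sum must then start below $1-L$: the paper's own final display runs from $m=1-P-L$, wider than the range in the statement.

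\emph{Your convention.} You keep the padding in $D$ and use genuine values only in the shift. Then $\mathbf{p}$ depends only on $(x_1,\ldots,x_{L+H})$, so you recover exactly the stated range $1-L\le m\le L+H-2$. The cost is that the averaged quantity is no longer literally the differencing-pathway derivative of either model. With padded $\mathbf{z}$, that derivative is $\partial z_{t-1}/\partial\delta_j=-x_{\max(t-j,1)}$, i.e.\ the padded value.

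For the same reason, your derivation goes through verbatim only under this hybrid reading. If you also unpad $\mathbf{z}$, you must extend $\mathbf{y}$ down to $x_{2-P}$, and the bottom of the $m$-range drops by $P-1$.

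In every version $\Phi$ is independent of $j$, which is the substantive claim. You also correctly saw that literal padding in the shift would introduce $j$-dependent boundary terms. The paper sidesteps those through its population-level setting.
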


There is a fundamental tension between Terms I and II: Term II encourages $\boldsymbol{\delta}$ to match the data autocorrelation structure, potentially implying large \(\|\boldsymbol{\delta}\|_1\), whereas Term I enforces stable reconstruction dynamics, requiring smaller \(\|\boldsymbol{\delta}\|_1\). The learned weights \(\boldsymbol{\delta}\) are therefore a compromise which depends on the relative magnitudes of these two contributions.


\subsection{Two-Phase Training}
\label{sec:training}

We address the tension between Terms I and II with a two-phase training scheme that separates temporal structure discovery from reconstruction learning.

\paragraph{Phase~1: residual loss.}
We first train the differencing operator and the backbone in residual space,
bypassing reconstruction entirely: the backbone output $\hat{\mathbf{z}}$ is
compared directly against the differenced target, minimizing
\begin{equation}
\mathcal{L}_{\mathrm{res}}
=\tfrac12\|\hat{\mathbf{z}}-\mathbf{z}_{\mathrm{f}}\|^2,
\qquad
z_{\mathrm{f},k}=x_{L+k}-\sum_{j=1}^{P}\delta_j\,x_{L+k-j},
\label{eq:proxy_loss}
\end{equation}
where $\mathbf{z}_{\mathrm{f}}$ applies the differencing operator to the
target window. Because $T_\mathbf{h}$ is absent from the forward pass,
Term~I vanishes and, with
$e_{\mathrm{p}}:=\hat{\mathbf{z}}-\mathbf{z}_{\mathrm{f}}$, the gradient
reduces to 

\begin{equation}
\frac{\partial\mathcal{L}_{\mathrm{res}}}{\partial\delta_j}
=e_{\mathrm{p}}^\top\bigl(\mathbf{x}_{\mathrm{f}}^{(j)}
-W_b\,\mathbf{x}^{(j)}_{\scriptscriptstyle 2:L}\bigr)
\label{eq:reduced_loss}
\end{equation}
with $\mathbf{x}_{\mathrm{f}}^{(j)}\in\mathbb{R}^{H}$ the lag-$j$ shift of the
target window, $(x_{\mathrm{f}}^{(j)})_k=x_{L+k-j}$. This gradient carries no
sensitivity $g_n^{(j)}$ and is bounded by
$O\!\bigl((\|W_b\|\,\|\mathbf{x}\|+\|\mathbf{x}_{\mathrm{f}}\|)
\|e_{\mathrm{p}}\|\bigr)$, independently of $\rho$, so Phase~1 fits the data
structure free of the reconstruction's conditioning.

\definecolor{OursColor}{HTML}{8B0000}
\begin{table*}[!ht]
    \centering
    \fontsize{10pt}{10pt}\selectfont
    \scalebox{0.5}{
    \begin{tabular}{l | c |c c|c c|c c|c c|c c|c c|c c|c c|c c|c c|c c|c c|c c}
    \toprule[2pt]
       \multicolumn{2}{c|}{Models} & \multicolumn{2}{c|}{\makecell{\textcolor{OursColor}{\textbf{AdaRDiff-Linear}}\\(Ours)}} & \multicolumn{2}{c|}{\makecell{\textcolor{OursColor}{\textbf{AdaRDiff-MLP}}\\(Ours)}} & \multicolumn{2}{c|}{\makecell{MixLinear\\(2026)}} & \multicolumn{2}{c|}{\makecell{TimeBase\\(2025)}} & \multicolumn{2}{c|}{\makecell{TQNet\\(2025)}} & \multicolumn{2}{c|}{\makecell{SparseTSF\\(2024)}} & \multicolumn{2}{c|}{\makecell{TimeMixer\\(2024)}} & \multicolumn{2}{c|}{\makecell{iTransformer\\(2024)}} & \multicolumn{2}{c|}{\makecell{CycleNet\\(2024)}} & \multicolumn{2}{c|}{\makecell{DLinear\\(2023)}} & \multicolumn{2}{c|}{\makecell{PatchTST\\(2023)}} & \multicolumn{2}{c|}{\makecell{TimesNet\\(2023)}} & \multicolumn{2}{c}{\makecell{FEDformer\\(2022)}} \\
       \cmidrule(lr){1-2}\cmidrule(lr){3-28}
       \multicolumn{2}{c|}{Metrics} & MSE & MAE & MSE & MAE & MSE & MAE & MSE & MAE & MSE & MAE & MSE & MAE & MSE & MAE & MSE & MAE & MSE & MAE & MSE & MAE & MSE & MAE & MSE & MAE & MSE & MAE \\
       \midrule
         & 96   & 0.360 & 0.390 & 0.389 & 0.412 & \underline{\textcolor{blue}{0.351}} & \textcolor{red}{\textbf{0.383}} & \textcolor{red}{\textbf{0.349}} & \underline{\textcolor{blue}{0.384}} & 0.377 & 0.404 & 0.362 & 0.389 & 0.410 & 0.441 & 0.389 & 0.421 & 0.379 & 0.403 & 0.378 & 0.402 & 0.377 & 0.408 & 0.437 & 0.454 & 0.485 & 0.500   \\
         & 192   & 0.402 & 0.419 & 0.409 & 0.434 & \underline{\textcolor{blue}{0.395}} & 0.423 & \textcolor{red}{\textbf{0.387}} & \textcolor{red}{\textbf{0.410}} & 0.424 & 0.437 & 0.404 & \underline{\textcolor{blue}{0.412}} & 0.448 & 0.465 & 0.424 & 0.446 & 0.416 & 0.425 & 0.415 & 0.425 & 0.413 & 0.431 & 0.456 & 0.469 & 0.481 & 0.498   \\
         & 336   & 0.426 & 0.436 & 0.449 & 0.446 & \underline{\textcolor{blue}{0.411}} & 0.434 & \textcolor{red}{\textbf{0.408}} & \textcolor{red}{\textbf{0.418}} & 0.450 & 0.454 & 0.435 & \underline{\textcolor{blue}{0.428}} & 0.482 & 0.490 & 0.456 & 0.469 & 0.447 & 0.445 & 0.449 & 0.449 & 0.436 & 0.446 & 0.494 & 0.494 & 0.522 & 0.521   \\
       \multirow{-4}*{\rotatebox{90}{ETTh1}}  & 720   & \textcolor{red}{\textbf{0.423}} & 0.450 & 0.487 & 0.485 & \textcolor{red}{\textbf{0.423}} & 0.456 & 0.439 & \textcolor{red}{\textbf{0.446}} & 0.521 & 0.517 & \underline{\textcolor{blue}{0.426}} & \underline{\textcolor{blue}{0.448}} & 0.475 & 0.500 & 0.545 & 0.532 & 0.477 & 0.483 & 0.507 & 0.517 & 0.455 & 0.475 & 0.632 & 0.578 & 0.604 & 0.575   \\
         \midrule
         & 96   & \textcolor{red}{\textbf{0.268}} & \textcolor{red}{\textbf{0.333}} & 0.272 & 0.340 & 0.283 & 0.340 & 0.292 & 0.345 & 0.282 & 0.350 & 0.294 & 0.346 & 0.315 & 0.380 & 0.305 & 0.361 & \underline{\textcolor{blue}{0.271}} & \underline{\textcolor{blue}{0.337}} & 0.294 & 0.360 & 0.276 & 0.339 & 0.349 & 0.403 & 0.401 & 0.451   \\
         & 192   & \textcolor{red}{\textbf{0.330}} & \textcolor{red}{\textbf{0.374}} & \textcolor{red}{\textbf{0.330}} & 0.378 & 0.337 & \underline{\textcolor{blue}{0.376}} & 0.339 & 0.387 & 0.354 & 0.393 & 0.340 & 0.377 & 0.383 & 0.415 & 0.405 & 0.421 & \underline{\textcolor{blue}{0.332}} & 0.380 & 0.412 & 0.437 & 0.342 & 0.385 & 0.500 & 0.488 & 0.425 & 0.464   \\
         & 336   & \textcolor{red}{\textbf{0.352}} & 0.399 & \textcolor{red}{\textbf{0.352}} & 0.404 & \underline{\textcolor{blue}{0.356}} & \textcolor{red}{\textbf{0.395}} & 0.358 & 0.410 & 0.387 & 0.419 & 0.360 & \underline{\textcolor{blue}{0.398}} & 0.385 & 0.438 & 0.411 & 0.436 & 0.362 & 0.408 & 0.471 & 0.478 & 0.364 & 0.405 & 0.445 & 0.465 & 0.427 & 0.471   \\
       \multirow{-4}*{\rotatebox{90}{ETTh2}}  & 720   & \textcolor{red}{\textbf{0.378}} & \underline{\textcolor{blue}{0.424}} & 0.398 & 0.445 & \underline{\textcolor{blue}{0.380}} & \textcolor{red}{\textbf{0.423}} & 0.400 & 0.448 & 0.415 & 0.447 & 0.383 & 0.425 & 0.432 & 0.471 & 0.448 & 0.470 & 0.415 & 0.449 & 0.740 & 0.609 & 0.395 & 0.434 & 0.438 & 0.465 & 0.462 & 0.493   \\
         \midrule
         & 96   & \textcolor{red}{\textbf{0.286}} & \textcolor{red}{\textbf{0.336}} & \underline{\textcolor{blue}{0.291}} & \underline{\textcolor{blue}{0.345}} & 0.333 & 0.375 & 0.311 & 0.351 & 0.298 & 0.352 & 0.314 & 0.359 & 0.332 & 0.384 & 0.315 & 0.369 & 0.307 & 0.353 & 0.314 & 0.350 & 0.298 & 0.352 & 0.359 & 0.391 & 0.406 & 0.441   \\
         & 192   & \textcolor{red}{\textbf{0.326}} & \textcolor{red}{\textbf{0.360}} & \underline{\textcolor{blue}{0.327}} & \underline{\textcolor{blue}{0.363}} & 0.353 & 0.382 & 0.338 & 0.371 & 0.347 & 0.381 & 0.348 & 0.376 & 0.355 & 0.398 & 0.349 & 0.388 & 0.337 & 0.371 & 0.347 & 0.381 & 0.335 & 0.373 & 0.368 & 0.398 & 0.450 & 0.477   \\
         & 336   & \textcolor{red}{\textbf{0.356}} & \textcolor{red}{\textbf{0.378}} & \underline{\textcolor{blue}{0.362}} & \underline{\textcolor{blue}{0.384}} & 0.391 & 0.407 & 0.364 & 0.386 & 0.373 & 0.395 & 0.368 & 0.386 & 0.386 & 0.416 & 0.381 & 0.409 & 0.364 & 0.387 & 0.367 & 0.387 & 0.366 & 0.394 & 0.429 & 0.438 & 0.436 & 0.466   \\
       \multirow{-4}*{\rotatebox{90}{ETTm1}}  & 720   & \textcolor{red}{\textbf{0.410}} & \textcolor{red}{\textbf{0.408}} & 0.421 & 0.416 & 0.439 & 0.435 & \underline{\textcolor{blue}{0.413}} & 0.414 & 0.437 & 0.429 & 0.419 & 0.413 & 0.452 & 0.457 & 0.437 & 0.439 & \textcolor{red}{\textbf{0.410}} & \underline{\textcolor{blue}{0.411}} & 0.415 & 0.415 & 0.420 & 0.421 & 0.477 & 0.474 & 0.462 & 0.479   \\
         \midrule
         & 96   & \textcolor{red}{\textbf{0.159}} & \underline{\textcolor{blue}{0.250}} & \underline{\textcolor{blue}{0.160}} & \underline{\textcolor{blue}{0.250}} & 0.170 & 0.261 & 0.162 & 0.256 & 0.167 & 0.257 & 0.167 & 0.259 & 0.192 & 0.285 & 0.179 & 0.274 & \textcolor{red}{\textbf{0.159}} & \textcolor{red}{\textbf{0.249}} & 0.163 & 0.257 & 0.165 & 0.260 & 0.200 & 0.288 & 0.339 & 0.406   \\
         & 192   & \underline{\textcolor{blue}{0.216}} & \textcolor{red}{\textbf{0.289}} & \textcolor{red}{\textbf{0.214}} & \underline{\textcolor{blue}{0.291}} & 0.223 & 0.298 & 0.218 & 0.293 & 0.220 & 0.295 & 0.219 & 0.297 & 0.253 & 0.329 & 0.239 & 0.314 & \textcolor{red}{\textbf{0.214}} & \textcolor{red}{\textbf{0.289}} & 0.223 & 0.304 & 0.219 & 0.298 & 0.274 & 0.337 & 0.397 & 0.452   \\
         & 336   & \underline{\textcolor{blue}{0.268}} & \underline{\textcolor{blue}{0.327}} & \textcolor{red}{\textbf{0.265}} & \underline{\textcolor{blue}{0.327}} & 0.275 & 0.332 & 0.270 & 0.328 & 0.283 & 0.340 & 0.271 & 0.330 & 0.307 & 0.362 & 0.309 & 0.356 & \underline{\textcolor{blue}{0.268}} & \textcolor{red}{\textbf{0.326}} & 0.291 & 0.355 & \underline{\textcolor{blue}{0.268}} & 0.333 & 0.340 & 0.382 & 0.449 & 0.491   \\
       \multirow{-4}*{\rotatebox{90}{ETTm2}}  & 720   & \underline{\textcolor{blue}{0.350}} & \textcolor{red}{\textbf{0.380}} & \textcolor{red}{\textbf{0.345}} & \underline{\textcolor{blue}{0.381}} & 0.360 & 0.384 & 0.352 & \textcolor{red}{\textbf{0.380}} & 0.370 & 0.391 & 0.353 & \textcolor{red}{\textbf{0.380}} & 0.380 & 0.412 & 0.387 & 0.407 & 0.353 & 0.384 & 0.407 & 0.433 & 0.352 & 0.386 & 0.384 & 0.407 & 0.451 & 0.499   \\
         \midrule
         & 96   & \textcolor{red}{\textbf{0.140}} & \textcolor{red}{\textbf{0.192}} & \underline{\textcolor{blue}{0.141}} & 0.201 & 0.172 & 0.227 & 0.146 & \underline{\textcolor{blue}{0.198}} & 0.162 & 0.216 & 0.174 & 0.231 & 0.163 & 0.223 & 0.159 & 0.212 & 0.164 & 0.220 & 0.174 & 0.242 & 0.149 & 0.199 & 0.176 & 0.234 & 0.289 & 0.342   \\
         & 192   & \textcolor{red}{\textbf{0.181}} & \textcolor{red}{\textbf{0.234}} & \underline{\textcolor{blue}{0.183}} & 0.245 & 0.212 & 0.260 & 0.185 & \underline{\textcolor{blue}{0.241}} & 0.200 & 0.249 & 0.216 & 0.267 & 0.201 & 0.254 & 0.203 & 0.252 & 0.209 & 0.258 & 0.215 & 0.277 & 0.193 & 0.243 & 0.219 & 0.270 & 0.340 & 0.394   \\
         & 336   & \textcolor{red}{\textbf{0.234}} & \textcolor{red}{\textbf{0.276}} & \textcolor{red}{\textbf{0.234}} & 0.288 & 0.257 & 0.295 & \underline{\textcolor{blue}{0.236}} & \underline{\textcolor{blue}{0.281}} & 0.251 & 0.288 & 0.260 & 0.299 & 0.258 & 0.300 & 0.253 & 0.291 & 0.242 & 0.283 & 0.262 & 0.319 & 0.240 & \underline{\textcolor{blue}{0.281}} & 0.277 & 0.311 & 0.370 & 0.408   \\
       \multirow{-4}*{\rotatebox{90}{Weather}}  & 720   & \textcolor{red}{\textbf{0.302}} & 0.342 & \underline{\textcolor{blue}{0.303}} & 0.339 & 0.324 & 0.344 & 0.309 & \textcolor{red}{\textbf{0.331}} & 0.318 & 0.339 & 0.325 & 0.345 & 0.329 & 0.348 & 0.317 & 0.337 & 0.338 & \underline{\textcolor{blue}{0.333}} & 0.319 & 0.359 & 0.312 & 0.334 & 0.344 & 0.356 & 0.420 & 0.421   \\
         \midrule
         & 96   & 0.131 & \underline{\textcolor{blue}{0.225}} & \textcolor{red}{\textbf{0.128}} & \underline{\textcolor{blue}{0.225}} & 0.139 & 0.233 & 0.139 & 0.231 & 0.138 & 0.237 & 0.139 & 0.239 & 0.142 & 0.247 & 0.135 & 0.233 & \textcolor{red}{\textbf{0.128}} & \textcolor{red}{\textbf{0.223}} & 0.141 & 0.244 & 0.141 & 0.240 & 0.202 & 0.308 & 0.226 & 0.341   \\
         & 192   & \underline{\textcolor{blue}{0.146}} & \underline{\textcolor{blue}{0.241}} & \underline{\textcolor{blue}{0.146}} & 0.244 & 0.154 & 0.248 & 0.153 & 0.245 & 0.150 & 0.246 & 0.155 & 0.250 & 0.164 & 0.275 & 0.155 & 0.253 & \textcolor{red}{\textbf{0.143}} & \textcolor{red}{\textbf{0.237}} & 0.155 & 0.258 & 0.156 & 0.256 & 0.218 & 0.322 & 0.220 & 0.336   \\
         & 336   & \underline{\textcolor{blue}{0.161}} & \underline{\textcolor{blue}{0.259}} & \underline{\textcolor{blue}{0.161}} & 0.261 & 0.171 & 0.265 & 0.169 & 0.262 & 0.184 & 0.277 & 0.171 & 0.265 & 0.171 & 0.260 & 0.169 & 0.267 & \textcolor{red}{\textbf{0.159}} & \textcolor{red}{\textbf{0.254}} & 0.170 & 0.275 & 0.172 & 0.267 & 0.232 & 0.332 & 0.224 & 0.337   \\
       \multirow{-4}*{\rotatebox{90}{Electricity}}  & 720   & \textcolor{red}{\textbf{0.196}} & 0.295 & \textcolor{red}{\textbf{0.196}} & 0.296 & 0.209 & 0.298 & 0.207 & \underline{\textcolor{blue}{0.294}} & 0.207 & 0.296 & 0.208 & 0.300 & 0.209 & 0.313 & 0.204 & 0.301 & \underline{\textcolor{blue}{0.197}} & \textcolor{red}{\textbf{0.287}} & 0.209 & 0.309 & 0.208 & 0.299 & 0.299 & 0.375 & 0.271 & 0.378   \\
         \midrule
         & 96   & 0.387 & 0.270 & \underline{\textcolor{blue}{0.367}} & \underline{\textcolor{blue}{0.265}} & 0.390 & 0.268 & 0.394 & 0.267 & 0.381 & 0.268 & 0.389 & 0.268 & 0.396 & 0.294 & 0.374 & 0.273 & 0.381 & 0.266 & 0.396 & 0.272 & \textcolor{red}{\textbf{0.363}} & \textcolor{red}{\textbf{0.250}} & 0.605 & 0.325 & 0.664 & 0.431   \\
         & 192   & 0.397 & 0.273 & \textcolor{red}{\textbf{0.382}} & 0.272 & 0.404 & 0.276 & 0.403 & \underline{\textcolor{blue}{0.271}} & 0.396 & 0.277 & 0.399 & 0.272 & 0.404 & 0.295 & \underline{\textcolor{blue}{0.393}} & 0.283 & 0.394 & 0.273 & 0.404 & 0.275 & \textcolor{red}{\textbf{0.382}} & \textcolor{red}{\textbf{0.258}} & 0.627 & 0.340 & 0.613 & 0.382   \\
         & 336   & 0.412 & 0.280 & \textcolor{red}{\textbf{0.396}} & 0.279 & 0.416 & 0.283 & 0.417 & 0.278 & \underline{\textcolor{blue}{0.397}} & \underline{\textcolor{blue}{0.277}} & 0.417 & 0.279 & 0.419 & 0.302 & 0.409 & 0.292 & 0.406 & 0.279 & 0.417 & 0.283 & 0.399 & \textcolor{red}{\textbf{0.268}} & 0.631 & 0.349 & 0.612 & 0.379   \\
       \multirow{-4}*{\rotatebox{90}{Traffic}}  & 720   & 0.449 & 0.298 & \underline{\textcolor{blue}{0.434}} & 0.298 & 0.452 & \textcolor{red}{\textbf{0.283}} & 0.456 & 0.298 & 0.439 & 0.299 & 0.449 & 0.299 & 0.458 & 0.309 & 0.450 & 0.314 & 0.441 & 0.300 & 0.457 & 0.310 & \textcolor{red}{\textbf{0.432}} & \underline{\textcolor{blue}{0.289}} & 0.700 & 0.371 & 0.664 & 0.410   \\
         \midrule
         & 96   & 0.189 & 0.258 & \textcolor{red}{\textbf{0.162}} & \textcolor{red}{\textbf{0.217}} & 0.339 & 0.465 & 0.192 & 0.239 & \underline{\textcolor{blue}{0.184}} & 0.246 & 0.205 & 0.241 & 0.232 & 0.271 & 0.217 & 0.255 & 0.194 & 0.255 & 0.200 & \underline{\textcolor{blue}{0.234}} & 0.205 & 0.239 & 0.243 & 0.284 & 0.231 & 0.254   \\
         & 192   & 0.212 & 0.277 & \textcolor{red}{\textbf{0.178}} & \textcolor{red}{\textbf{0.232}} & 0.345 & 0.445 & 0.213 & 0.252 & \underline{\textcolor{blue}{0.201}} & 0.253 & 0.215 & 0.265 & 0.238 & 0.293 & 0.208 & 0.257 & 0.205 & \underline{\textcolor{blue}{0.251}} & 0.239 & 0.294 & 0.227 & 0.280 & 0.253 & 0.311 & 0.257 & 0.301   \\
         & 336   & 0.229 & 0.290 & \textcolor{red}{\textbf{0.187}} & \textcolor{red}{\textbf{0.248}} & 0.431 & 0.470 & 0.222 & 0.261 & 0.220 & 0.263 & \underline{\textcolor{blue}{0.213}} & 0.276 & 0.234 & 0.301 & 0.238 & 0.309 & 0.218 & \underline{\textcolor{blue}{0.257}} & 0.231 & 0.298 & 0.225 & 0.290 & 0.222 & 0.287 & 0.260 & 0.319   \\
       \multirow{-4}*{\rotatebox{90}{Solar}}  & 720   & 0.237 & 0.296 & \textcolor{red}{\textbf{0.195}} & \textcolor{red}{\textbf{0.251}} & 0.241 & 0.308 & 0.235 & \underline{\textcolor{blue}{0.264}} & 0.236 & 0.282 & \underline{\textcolor{blue}{0.232}} & 0.272 & 0.273 & 0.319 & 0.270 & 0.319 & 0.239 & 0.278 & 0.237 & 0.277 & 0.249 & 0.291 & 0.254 & 0.298 & 0.260 & 0.290   \\
         \midrule
         \multicolumn{2}{c|}{$1^{st}$ Count} & \textcolor{red}{\textbf{15}} & \textcolor{red}{\textbf{11}} & \underline{\textcolor{blue}{14}} & 4 & 1 & 4 & 3 & 5 & 0 & 0 & 0 & 1 & 0 & 0 & 0 & 0 & 6 & \underline{\textcolor{blue}{7}} & 0 & 0 & 3 & 3 & 0 & 0 & 0 & 0 \\
         \bottomrule[2pt]
    \end{tabular}
    }
    \caption{Forecasting comparison with lookback $L=720$ and prediction length
$H \in \{96,192,336,720\}$. Baseline results are taken from the
TimeBase~\cite{pmlr-v267-huang25az} paper where available, and otherwise
reproduced using the official code of each baseline. The best results are
shown in \textbf{\textcolor{red}{red}}, second best in
\underline{\textcolor{blue}{blue}}.}
    \label{tab:input_720}
\end{table*}

\paragraph{Phase~2: forecast loss.}
We then optimize the full objective
$\mathcal{L}_{\mathrm{forecast}}=\tfrac12\|\hat{\mathbf{x}}_{\mathrm{f}}-\mathbf{x}_{\mathrm{f}}\|^2$,
reintroducing $T_\mathbf{h}$ and Term~I. Initialized from the Phase~1
solution, $\boldsymbol{\delta}$ already fits the data structure, so
Term~I refines the reconstruction rather than discovering structure under
an ill-conditioned gradient. This schedule improves accuracy by a substantial margin over single-phase
training with
 $\mathcal{L}_{\mathrm{forecast}}$ (Section~\ref{sec:ablation}).

\section{Experiments}\label{sec:exps}

\subsection{Experimental Setup}
\paragraph{Datasets.} To evaluate our proposed method, we conduct experiments on several widely adopted benchmark datasets: the ETT collection (ETTh1, ETTh2, ETTm1, ETTm2)~\cite{informer}, Weather, Traffic, Electricity, and Solar-Energy~\cite{10.1145/3209978.3210006}. Further dataset details are in Appendix~\ref{app:suppl-details}.

\begin{table}[t]
\centering
\scalebox{0.7}{
\begin{tabular}{l|ccc}
\toprule
Dataset & Two-Phase & Single-Phase & Impr.\ (\%) \\
\midrule
ETTh1 & \textcolor{red}{\textbf{0.404}} & 0.458 & \textcolor{red}{\textbf{+11.9}} \\
ETTh2 & \textcolor{red}{\textbf{0.342}} & 0.350 & \textcolor{red}{\textbf{+2.2}} \\
ETTm1 & \textcolor{red}{\textbf{0.350}} & 0.361 & \textcolor{red}{\textbf{+3.0}} \\
ETTm2 & \textcolor{red}{\textbf{0.255}} & 0.262 & \textcolor{red}{\textbf{+2.8}} \\
Weather & \textcolor{red}{\textbf{0.221}} & 0.223 & \textcolor{red}{\textbf{+0.9}} \\
Electricity & \textcolor{red}{\textbf{0.162}} & 0.217 & \textcolor{red}{\textbf{+25.6}} \\
Traffic & \textcolor{red}{\textbf{0.434}} & 0.503 & \textcolor{red}{\textbf{+13.7}} \\
Solar & \textcolor{red}{\textbf{0.253}} & 0.256 & \textcolor{red}{\textbf{+1.2}} \\

\midrule
\textbf{Overall} & & & \textcolor{red}{\textbf{+7.7}} \\
\bottomrule
\end{tabular}}
\caption{Two-phase versus single-phase training, test MSE
averaged over horizons $H\in\{96,192,336,720\}$ at look-back $L=336$.
\textbf{Impr.}\ is the relative MSE reduction of two-phase over
single-phase; positive favors two-phase. Better value in each pair in
\textbf{\textcolor{red}{red}}.}
\label{tab:twophase}
\end{table}

\paragraph{Baselines.}
We compare AdaRDiff against 11 state-of-the-art long-term forecasting baselines, grouped as (i) MLP/Linear-based methods: DLinear~\cite{dlinear}, SparseTSF~\cite{sparse}, TimeMixer~\cite{wang2023timemixer}, CycleNet~\cite{lin2024cyclenet}, TQNet~\cite{lin2025TQNet}, TimeBase~\cite{pmlr-v267-huang25az}, and MixLinear~\cite{ma2026mixlinear}; (ii) Transformer-based methods: FEDformer~\cite{fedformer}, iTransformer~\cite{liu2024itransformer}, and PatchTST~\cite{patchtst}; and (iii) CNN-based methods: TimesNet~\cite{wu2023timesnet}. 

\paragraph{Hyperparameters and Implementation Details.}
Owing to its minimal design, AdaRDiff's only method-specific hyperparameter
is the window size $P$. On the smaller ETT datasets it is tuned over
$\{4,24,96\}$, a grid spanning the dominant daily period of both the hourly
ETTh variants ($24$) and the $15$-minute ETTm variants ($96$), together
with a short window ($4$) capturing only short-range dependence; on the
other datasets $P$ is set to the dominant period (Appendix~\ref{app:suppl-details}). Three further
choices are selected per dataset on the validation set: the initialization
of $\boldsymbol{\delta}$, which is zero ($\boldsymbol{\delta}=\mathbf{0}$),
uniform ($\delta_j=1/P$), or first-order ($\delta_1=1$, else $0$); whether
to apply RevIN~\cite{revin} for distribution
shift; and whether to apply the $\ell_1$-reparameterization of
Eq.~\eqref{eq:reparam}. In the main results we evaluate linear and MLP
backbones, with all hyperparameters tuned on the validation set. Models are
implemented in PyTorch~\cite{10.5555/3454287.3455008} and trained with
Adam~\cite{Kingma2014AdamAM} for $30$ epochs in Phase~1 and $10$ in
Phase~2, with early stopping. We report MSE and MAE, and run all experiments
on a single NVIDIA H100 (80\,GB). Full details are in Appendix~\ref{app:pseudocode}.

\subsection{Main Results}
Table~\ref{tab:input_720} reports the full forecasting
comparison.\footnote{Results for shorter look-back windows
($L\in\{96,336,512\}$) are reported in Appendix~\ref{app:suppl-details}.} Across the eight
benchmarks, the AdaRDiff-Linear backbone attains the best results more often than the other methods, on both MSE and MAE (on $15$ and $11$ cases respectively), with the MLP backbone being second on MSE ($14$); together the two variants dominate the count of top-ranked scores by a wide margin over every competing method. The gains are largest on Solar-Energy, where the MLP backbone lowers MSE by
$11$--$16\%$ over the strongest baseline across all horizons. Photovoltaic output alternates zero-valued nights with gradual daytime
rises and falls \cite{lin2024cyclenet}, so consecutive values are highly predictable from
the immediate past: short-range structure rather than a seasonal cycle.
AdaRDiff suits this, its differencing weights concentrating at the smallest lags rather
than the daily period (Figure \ref{fig:delta_periodicity}), acting as a
low-order differencing.
On the high-dimensional Traffic dataset ($862$ channels), AdaRDiff matches the strong PatchTST baseline at a fraction (ca.\ $6.6$\%) of its parameter count, and clearly outperforms all other methods. We attribute this to the strong temporal-lag structure of Traffic \cite{lin2024cyclenet}: by conditioning each prediction on recent lagged values, AdaRDiff's recursive reconstruction directly exploits the short-range dependencies that dominate this dataset. Notably, all of these results are obtained with a lightweight linear or MLP backbone, indicating that the learnable differencing operator, rather than backbone capacity, drives the improvement.
\subsection{Ablation Study and Model Analysis}
\label{sec:ablation}

\paragraph{Two-phase training.}
Table~\ref{tab:twophase} compares the two-phase schedule with
single-phase training under identical settings. Two-phase training lowers
test MSE on all eight datasets, by $7.7\%$ on average, with the largest
gains on the highest-dimensional datasets: Electricity ($+25.6\%$, $321$
channels) and Traffic ($+13.7\%$, $862$ channels). Since a distinct
$\boldsymbol{\delta}$ is learned per channel, these are the settings in
which single-phase optimization must resolve the tension between the two
pathways across hundreds of independent $\boldsymbol{\delta}$, exactly where
separating the two phases helps most.


\paragraph{AdaRDiff as a Plug-and-Play Module.}
\begin{table}[t]
\centering

\renewcommand{\arraystretch}{1.1}
\setlength{\tabcolsep}{4pt}
\scalebox{0.5}{
\begin{tabular}{@{}l|l|c|c|c|c|c|c|c|c@{}}
\toprule
\multicolumn{2}{c|}{Models} & \multicolumn{1}{c|}{Linear} & \multicolumn{1}{c|}{DLinear} & \multicolumn{1}{c|}{MLP} & \multicolumn{1}{c|}{MixLinear} & \multicolumn{1}{c|}{SparseTSF} & \multicolumn{1}{c|}{CycleNet} & \multicolumn{1}{c|}{PatchTST} & \multicolumn{1}{c}{iTransformer} \\ \midrule
\multirow{3}{*}{ETTh1} & Original & 0.441 & 0.425 & 0.428 & 0.401 & 0.420 & 0.417 & 0.419 & 0.438 \\
 & \textbf{+AdaRDiff} & 0.404 & 0.409 & 0.425 & 0.400 & 0.402 & 0.408 & 0.413 & 0.427 \\ \cmidrule(lr){2-10}
 & Promotion & \textbf{8.4\%} & \textbf{3.8\%} & \textbf{0.6\%} & \textbf{0.4\%} & \textbf{4.4\%} & \textbf{2.0\%} & \textbf{1.3\%} & \textbf{2.4\%} \\ \addlinespace\cline{1-10} \addlinespace
\multirow{3}{*}{ETTh2} & Original & 0.462 & 0.470 & 0.371 & 0.355 & 0.350 & 0.351 & 0.351 & 0.396 \\
 & \textbf{+AdaRDiff} & 0.342 & 0.355 & 0.345 & 0.344 & 0.338 & 0.355 & 0.347 & 0.361 \\ \cmidrule(lr){2-10}
 & Promotion & \textbf{25.9\%} & \textbf{24.4\%} & \textbf{7.1\%} & \textbf{3.2\%} & \textbf{3.6\%} & \color{red}-\textbf{1.1\%} & \textbf{1.1\%} & \textbf{8.8\%} \\ \addlinespace\cline{1-10} \addlinespace
\multirow{3}{*}{ETTm1} & Original & 0.362 & 0.363 & 0.357 & 0.379 & 0.374 & 0.355 & 0.351 & 0.365 \\
 & \textbf{+AdaRDiff} & 0.350 & 0.353 & 0.349 & 0.356 & 0.356 & 0.350 & 0.342 & 0.367 \\ \cmidrule(lr){2-10}
 & Promotion & \textbf{3.3\%} & \textbf{2.8\%} & \textbf{2.1\%} & \textbf{6.2\%} & \textbf{5.0\%} & \textbf{1.6\%} & \textbf{2.5\%} & \color{red}-\textbf{0.4\%} \\ \addlinespace\cline{1-10} \addlinespace
\multirow{3}{*}{ETTm2} & Original & 0.293 & 0.274 & 0.268 & 0.267 & 0.266 & 0.252 & 0.258 & 0.281 \\
 & \textbf{+AdaRDiff} & 0.255 & 0.256 & 0.253 & 0.258 & 0.258 & 0.256 & 0.258 & 0.270 \\ \cmidrule(lr){2-10}
 & Promotion & \textbf{13.1\%} & \textbf{6.6\%} & \textbf{5.9\%} & \textbf{3.4\%} & \textbf{3.1\%} & \color{red}-\textbf{1.5\%} & \color{red}-\textbf{0.2\%} & \textbf{3.8\%} \\ \addlinespace\cline{1-10} \addlinespace
\multirow{3}{*}{Weather} & Original & 0.245 & 0.245 & 0.236 & 0.257 & 0.257 & 0.243 & 0.230 & 0.235 \\
 & \textbf{+AdaRDiff} & 0.221 & 0.225 & 0.221 & 0.230 & 0.226 & 0.237 & 0.225 & 0.230 \\ \cmidrule(lr){2-10}
 & Promotion & \textbf{9.9\%} & \textbf{8.1\%} & \textbf{6.5\%} & \textbf{10.5\%} & \textbf{11.8\%} & \textbf{2.5\%} & \textbf{2.3\%} & \textbf{1.8\%} \\ \addlinespace\cline{1-10} \addlinespace

\multirow{3}{*}{Electricity} & Original & 0.177 & 0.168 & 0.165 & 0.176 & 0.173 & 0.160 & 0.169 & 0.203 \\
 & \textbf{+AdaRDiff} & 0.162 & 0.165 & 0.162 & 0.168 & 0.167 & 0.159 & 0.163 & 0.166 \\ \cmidrule(lr){2-10}
 & Promotion & \textbf{8.4\%} & \textbf{1.9\%} & \textbf{1.6\%} & \textbf{4.6\%} & \textbf{3.4\%} & \textbf{1.0\%} & \textbf{3.4\%} & \textbf{18.3\%} \\ \addlinespace\cline{1-10} \addlinespace
\multirow{3}{*}{Traffic} & Original & 0.450 & 0.436 & 0.408 & 0.441 & 0.438 & 0.423 & 0.413 & 0.528 \\
 & \textbf{+AdaRDiff} & 0.434 & 0.435 & 0.406 & 0.439 & 0.437 & 0.421 & 0.422 & 0.438 \\ \cmidrule(lr){2-10}
 & Promotion & \textbf{3.6\%} & \textbf{0.3\%} & \textbf{0.3\%} & \textbf{0.4\%} & \textbf{0.0\%} & \textbf{0.5\%} & \color{red}-\textbf{2.2\%} & \textbf{17.1\%} \\ \addlinespace\cline{1-10} \addlinespace
 \multirow{3}{*}{Solar} & Original & 0.258 & 0.253 & 0.217 & 0.280 & 0.278 & 0.261 & 0.214 & 0.238 \\
 & \textbf{+AdaRDiff} & 0.253 & 0.262 & 0.191 & 0.269 & 0.266 & 0.249 & 0.200 & 0.207 \\ \cmidrule(lr){2-10}
 & Promotion & \textbf{2.0\%} & \color{red}-\textbf{3.5\%} & \textbf{11.7\%} & \textbf{3.7\%} & \textbf{4.3\%} & \textbf{4.4\%} & \textbf{6.5\%} & \textbf{13.1\%} \\

\bottomrule
\end{tabular}
}
\caption{Effect of \textbf{AdaRDiff} on each backbone (MSE), averaged over
horizons at $L=336$. \textbf{Promotion} is the relative MSE reduction
(\textcolor{red}{red} = regression).}
\label{tab:adardiff_avg_mse}
\end{table}
Table~\ref{tab:adardiff_avg_mse} evaluates AdaRDiff as a plug-and-play
module across eight backbones spanning linear models, MLPs, and
Transformers. AdaRDiff improves MSE in the large majority of
configurations ($56$ of $64$ backbone--dataset pairs), and no regression
exceeds $3.5\%$. Three patterns emerge. First, the gains are largest on backbones that lack any dedicated
mechanism for temporal structure. The plain linear model and MLP improve
by up to $25.9\%$ (linear, ETTh2), and a simple linear backbone equipped
with AdaRDiff matches or surpasses
much larger Transformer baselines---%
evidence that the improvement originates in the adaptive differencing and
reconstruction rather than in backbone capacity. Second, and more revealing, AdaRDiff also improves backbones that already
model temporal structure explicitly. DLinear removes trend by moving
average, SparseTSF and CycleNet model fixed periodic patterns, and
MixLinear separates trend from low-rank spectral components; each captures
a single form of structure in isolation. In contrast, AdaRDiff models
trend and seasonality jointly through one learnable differencing operator,
and therefore remains complementary even to backbones with built-in
decomposition, improving SparseTSF and MixLinear by up to $11.8\%$ and
$10.5\%$ respectively on Weather. Third, the benefit is most pronounced on iTransformer, which models
cross-variate correlations rather than per-series temporal dynamics. Here
AdaRDiff's per-channel differencing supplies the temporal component the
backbone lacks, yielding large gains on the high-dimensional Electricity
($+18.3\%$) and Traffic ($+17.1\%$) datasets. That AdaRDiff strengthens
even these specialized backbones establishes it as an effective and
broadly portable forecasting component.

\begin{figure}[t]
    \centering
    \includegraphics[width=0.4\textwidth]{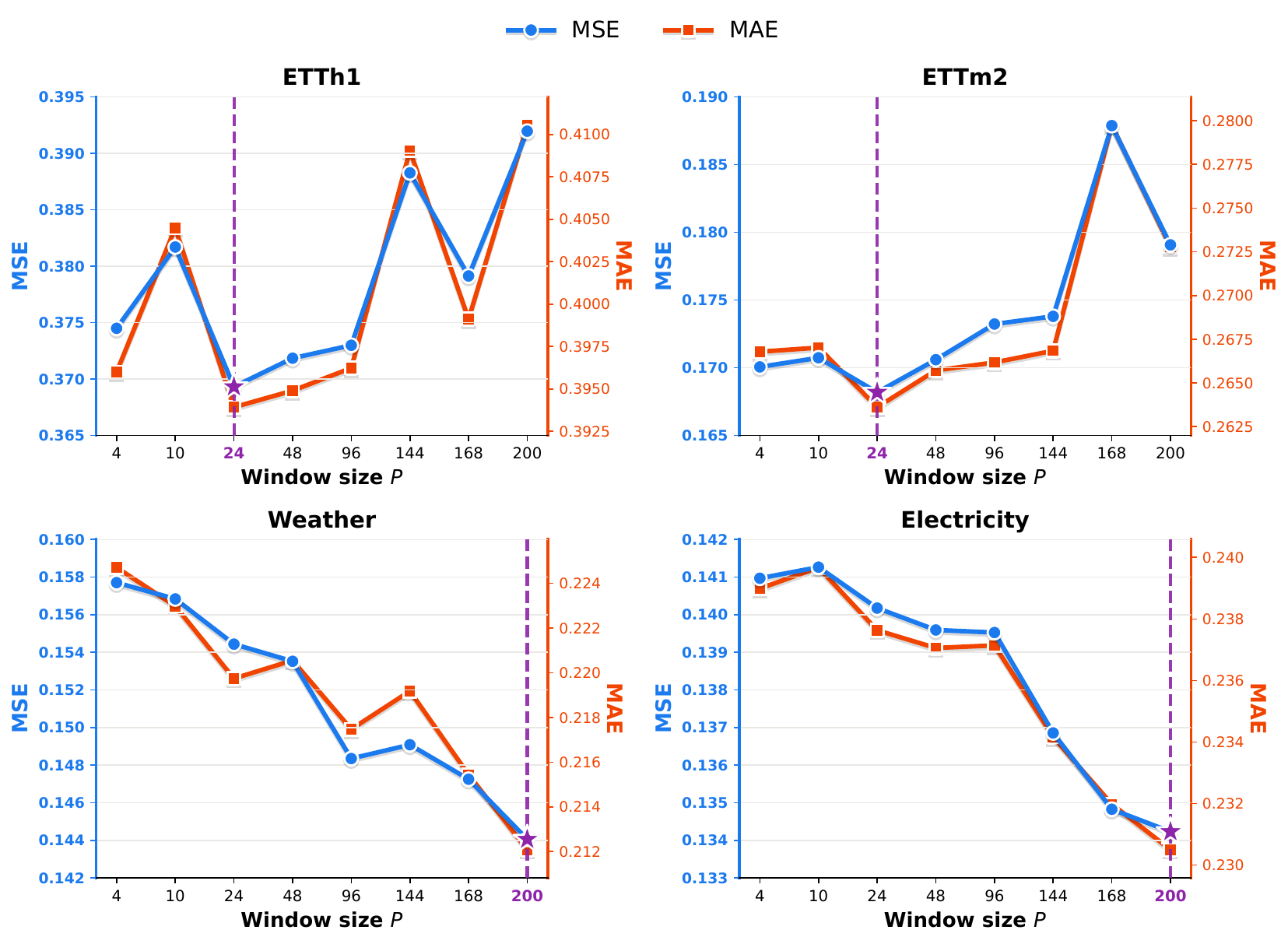}
    \caption{Performance of the AdaRDiff/Linear model with varied $P$.
The forecast horizon is set as 96. The dashed line marks the window size minimizing
 MSE.}
    \label{fig:window_size}
\end{figure}

\begin{figure*}[t]
    \begin{subfigure}[t]{0.7\textwidth}
        \includegraphics[width=\textwidth]{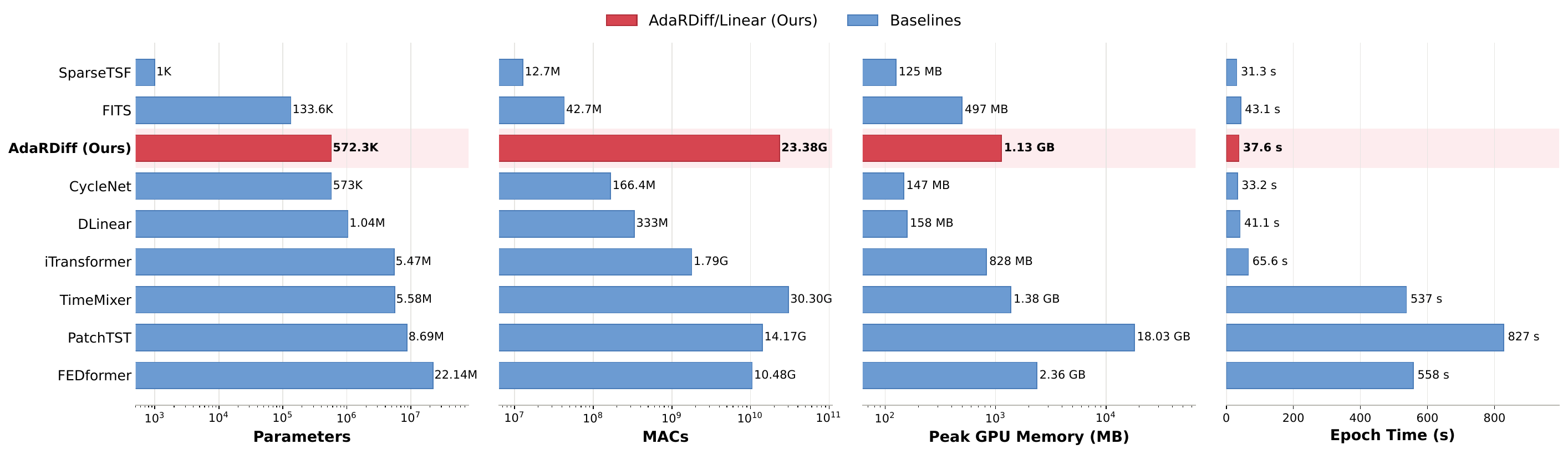}
        \caption{Compute, time, and memory}
        \label{fig:efficiency_combined}
    \end{subfigure}
    \begin{subfigure}[t]{0.25\textwidth}
        \includegraphics[width=\textwidth]{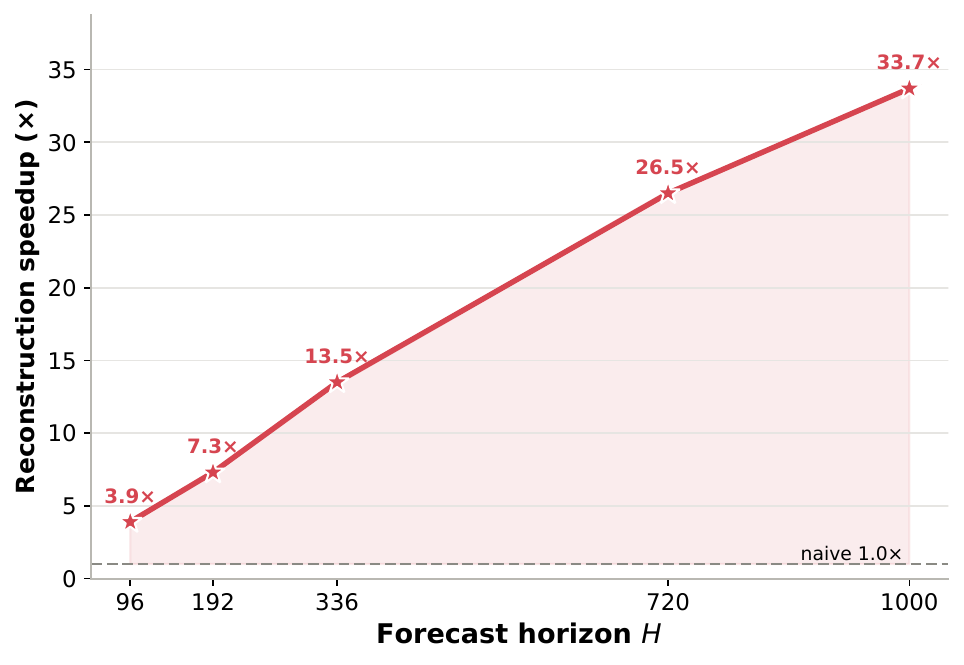}
        \caption{Reconstruction speedup}
        \label{fig:efficiency_speedup}
    \end{subfigure}
    \caption{Efficiency of AdaRDiff/Linear.
    \textbf{(a)} Number of parameters, compute (MACs per sample), peak GPU memory, and epoch time across models on Electricity (forecast length and look-back window
    both set to $720$; epoch time and memory measured at batch size $12$).
    \textbf{(b)} Inference speedup of the closed-form convolutional
    reconstruction over the naive sequential reconstruction as a function of
    the forecast horizon $H$.}
    \label{fig:efficiency}
\end{figure*}

\paragraph{Effect of the window size $P$.} Figure~\ref{fig:window_size} reports MSE against the differencing window
$P$ on four datasets of differing periods. The best $P$ broadly
reflects the temporal structure of each dataset: ETTh1, with a daily period
($24$ steps), is optimal at $P=24$, while the longer-period,
high-dimensional datasets Weather and Electricity (periods of $144$ and
$168$) favor larger windows. On ETTm2, MSE changes little between $P=24$
and its dominant period $P=96$, showing that AdaRDiff is robust to this
choice. Overall, performance tracks the dominant period while remaining
stable to moderate variations in $P$.

\begin{figure}[!t]
   
    \includegraphics[width=0.45\textwidth]{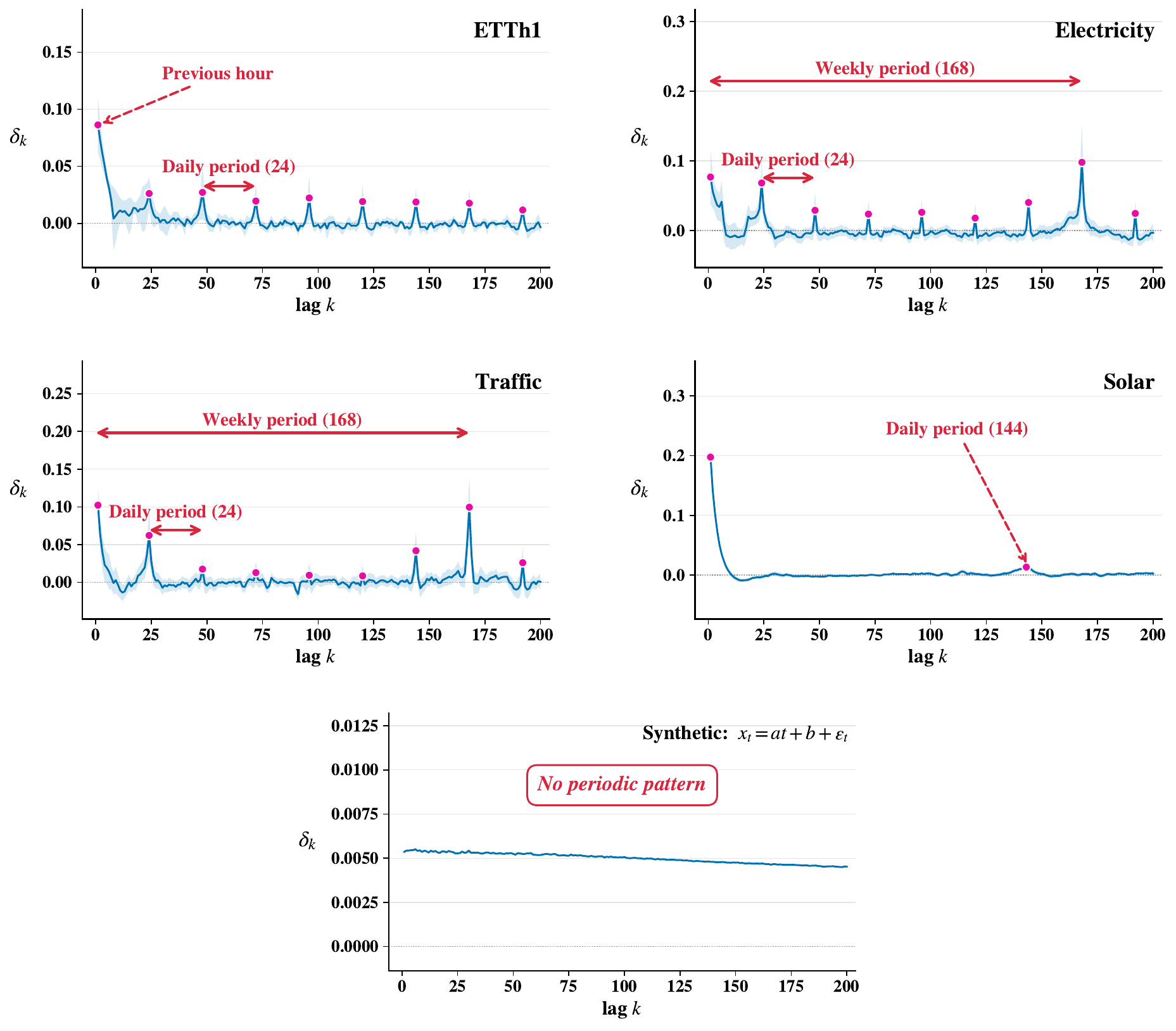}
    \caption{Learned differencing weights $\delta_k$ across lags for
    AdaRDiff with look-back $L=336$, forecast horizon
    $H=96$, window size extended to $P=200$ and $\boldsymbol{\delta}$
    initialized to zero). Curves show the mean over channels, with shaded regions denoting $\pm$ one standard
deviation.}
    \label{fig:delta_periodicity}
\end{figure}

\paragraph{Efficiency analysis.}

Figure~\ref{fig:efficiency}(a) reports the number of parameters, the compute (MACs per sample), the peak GPU memory (in MB) and the average epoch time (in seconds) on
Electricity. As a plug-and-play module, AdaRDiff adds only $C\!\times\!P$
parameters ($572$K with a linear backbone, comparable to CycleNet and far
below PatchTST at $8.7$M), and its epoch time ($37.6$\,s) matches the
lightweight linear models. The higher MACs and memory come from the
reconstruction, which builds the sequence $\mathbf{h}$ in
$O(C\,P^{3}\log H)$; the cubic dependence on $P$ dominates on long-period
datasets ($P{=}168$ on Electricity). Since accuracy remains stable to
moderate variations in $P$ (Figure~\ref{fig:window_size}), reducing $P$
saves compute at negligible accuracy loss: on Electricity, $P{=}48$
vs.\ $168$ cuts the reconstruction cost $\sim\!43\times$ for only a
$2.9\%$ accuracy loss. Crucially, although the raw MAC count is high, our
reconstruction replaces the sequential recurrence with a parallel
formulation of $O(\log H)$ which runs efficiently on GPU. This explains why the epoch time is low despite the peak memory usage and compute values. Figure~\ref{fig:efficiency}(b) shows that our reconstruction is up to $33.7\times$ faster than the naive recurrence, the gain growing with the horizon. \emph{AdaRDiff thus attains its accuracy gains at a compute cost which is partly controllable through $P$, and a runtime competitive with the lightest baselines.}

\paragraph{Visualization and Interpretation of the Learned Differencing Weights.}AdaRDiff learns the differencing weights $\boldsymbol{\delta}$ directly from
the training data. Figure~\ref{fig:delta_periodicity} shows the learned
$\delta_k$ across lags. On ETTh1, Electricity, and Traffic the weights
exhibit clear seasonal structure: sharp peaks at the daily period ($24$) on
ETTh1, and at both the daily ($24$) and weekly ($168$) periods on
Electricity and Traffic, aligning with the known period lengths of each
dataset (Appendix~\ref{app:suppl-details}). On ETTh1 the coefficient at lag~$1$ is also large and
decays quickly, indicating strong short-range dependence on the immediate
past. On a
synthetic series with a linear trend and no periodicity
($x_t = at + b + \varepsilon_t$), by contrast, the weights are
near-uniform ($\delta_k \approx 1/P = 0.005$) with no periodic peaks;
applied as a differencing operator, such weights subtract the moving
average of the recent past, removing the linear trend---leaving a
stable residual---while averaging out the noise. The model thus
produces distinct weight patterns for distinct temporal structures, adapting its
differencing to each series without human hard-coding and providing an
interpretable account of its behavior.

\section{Conclusion}
We introduced AdaRDiff, a model-agnostic module that revisits classical
differencing through a learnable, reversible operator. We provided both a theoretical analysis of how AdaRDiff behaves with a linear backbone, as well as an experimental validation including different backbones.
Across eight benchmarks and eight backbones, AdaRDiff attains
state-of-the-art accuracy, yields consistent plug-and-play
gains, and produces interpretable differencing weights that recover 
temporal structures. More broadly, our results show that classical forecasting tools offer
principled inductive biases that make deep models more robust and
interpretable when adapted to end-to-end learning. A compelling next step is
to bring this perspective to time series foundation models: by handling
trend and seasonality explicitly, a learnable differencing operator addresses
the dominant temporal structure directly, which could let these models reach
strong accuracy with far fewer parameters while exposing, through the learned
weights, the temporal patterns they otherwise capture only implicitly.

\bibliography{aaai2027}

@inproceedings{
liu2024itransformer,
title={iTransformer: Inverted Transformers Are Effective for Time Series Forecasting},
author={Yong Liu and Tengge Hu and Haoran Zhang and Haixu Wu and Shiyu Wang and Lintao Ma and Mingsheng Long},
booktitle={The Twelfth International Conference on Learning Representations},
year={2024},
url={https://openreview.net/forum?id=JePfAI8fah}
}

@inproceedings{
zhang2023crossformer,
title={Crossformer: Transformer Utilizing Cross-Dimension Dependency for Multivariate Time Series Forecasting},
author={Yunhao Zhang and Junchi Yan},
booktitle={The Eleventh International Conference on Learning Representations },
year={2023},
url={https://openreview.net/forum?id=vSVLM2j9eie}
}

@inproceedings{
liu2022nonstationary,
title={Non-stationary Transformers: Exploring the Stationarity in Time Series Forecasting},
author={Yong Liu and Haixu Wu and Jianmin Wang and Mingsheng Long},
booktitle={Advances in Neural Information Processing Systems},
editor={Alice H. Oh and Alekh Agarwal and Danielle Belgrave and Kyunghyun Cho},
year={2022},
url={https://openreview.net/forum?id=ucNDIDRNjjv}
}

@InProceedings{fedformer,
  title = 	 {{FED}former: Frequency Enhanced Decomposed Transformer for Long-term Series Forecasting},
  author =       {Zhou, Tian and Ma, Ziqing and Wen, Qingsong and Wang, Xue and Sun, Liang and Jin, Rong},
  booktitle = 	 {Proceedings of the 39th International Conference on Machine Learning},
  pages = 	 {27268--27286},
  year = 	 {2022},
  editor = 	 {Chaudhuri, Kamalika and Jegelka, Stefanie and Song, Le and Szepesvari, Csaba and Niu, Gang and Sabato, Sivan},
  volume = 	 {162},
  series = 	 {Proceedings of Machine Learning Research},
  month = 	 {17--23 Jul},
  publisher =    {PMLR},
  url = 	 {https://proceedings.mlr.press/v162/zhou22g.html}
}

@inproceedings{10.5555/3692070.3692474,
author = {Das, Abhimanyu and Kong, Weihao and Sen, Rajat and Zhou, Yichen},
title = {A decoder-only foundation model for time-series forecasting},
year = {2024},
publisher = {JMLR.org},
booktitle = {Proceedings of the 41st International Conference on Machine Learning},
articleno = {404},
numpages = {20},
location = {Vienna, Austria},
series = {ICML'24}
}

@article{ansari2024chronos,
  title={Chronos: Learning the Language of Time Series},
  author={Ansari, Abdul Fatir and Stella, Lorenzo and Turkmen, Caner and Zhang, Xiyuan and Mercado, Pedro and Shen, Huibin and Shchur, Oleksandr and Rangapuram, Syama Syndar and Pineda Arango, Sebastian and Kapoor, Shubham and Zschiegner, Jasper and Maddix, Danielle C. and Mahoney, Michael W. and Torkkola, Kari and Gordon Wilson, Andrew and Bohlke-Schneider, Michael and Wang, Yuyang},
  journal={Transactions on Machine Learning Research},
  issn={2835-8856},
  year={2024},
  url={https://openreview.net/forum?id=gerNCVqqtR}
}

@inproceedings{niu2026phaseformer,
  title={PhaseFormer: From Patches to Phases for Efficient and Effective Time Series Forecasting},
  author={Niu, Yiming and Deng, Jinliang and Tong, Yongxin},
  booktitle={International Conference on Learning Representations (ICLR)},
  year={2026}
}

@misc{laglil2026foundationmodelsfinetuningnew,
      title={Foundation Models and Fine-Tuning: Toward a New Generation of Models for Time Series Forecasting}, 
      author={Morad Laglil and Bertrand Pracca and Emilie Devijver and Eric Gaussier},
      year={2026},
      eprint={2607.23146},
      archivePrefix={arXiv},
      primaryClass={cs.LG},
      url={https://arxiv.org/abs/2607.23146}, 
}

@article{qiu2024tfb,
  title   = {TFB: Towards Comprehensive and Fair Benchmarking of Time Series Forecasting Methods},
  author  = {Xiangfei Qiu and Jilin Hu and Lekui Zhou and Xingjian Wu and Junyang Du and Buang Zhang and Chenjuan Guo and Aoying Zhou and Christian S. Jensen and Zhenli Sheng and Bin Yang},
  journal = {Proc. {VLDB} Endow.},
  volume  = {17},
  number  = {9},
  pages   = {2363--2377},
  year    = {2024}
}

@inproceedings{10.1145/3209978.3210006,
author = {Lai, Guokun and Chang, Wei-Cheng and Yang, Yiming and Liu, Hanxiao},
title = {Modeling Long- and Short-Term Temporal Patterns with Deep Neural Networks},
year = {2018},
isbn = {9781450356572},
publisher = {Association for Computing Machinery},
address = {New York, NY, USA},
url = {https://doi.org/10.1145/3209978.3210006},
doi = {10.1145/3209978.3210006},
booktitle = {The 41st International ACM SIGIR Conference on Research \& Development in Information Retrieval},
pages = {95–104},
numpages = {10},
location = {Ann Arbor, MI, USA},
series = {SIGIR '18}
}

@inproceedings{patchtst,
  title     = {A Time Series is Worth 64 Words: Long-term Forecasting with Transformers},
  author    = {Nie, Yuqi and
               H. Nguyen, Nam and
               Sinthong, Phanwadee and 
               Kalagnanam, Jayant},
  booktitle = {International Conference on Learning Representations},
  year      = {2023}
}

@inproceedings{informer,
  author       = {Haoyi Zhou and
                  Shanghang Zhang and
                  Jieqi Peng and
                  Shuai Zhang and
                  Jianxin Li and
                  Hui Xiong and
                  Wancai Zhang},
  title        = {Informer: Beyond Efficient Transformer for Long Sequence Time-Series
                  Forecasting},
  booktitle    = {Thirty-Fifth {AAAI} Conference on Artificial Intelligence, {AAAI}
                  2021, Thirty-Third Conference on Innovative Applications of Artificial
                  Intelligence, {IAAI} 2021, The Eleventh Symposium on Educational Advances
                  in Artificial Intelligence, {EAAI} 2021,  February 2-9,
                  2021},
  pages        = {11106--11115},
  publisher    = {{AAAI} Press},
  year         = {2021},
  url          = {https://doi.org/10.1609/aaai.v35i12.17325},
  doi          = {10.1609/AAAI.V35I12.17325},
  bibsource    = {dblp computer science bibliography, https://dblp.org},
  address= {Virtual Event}
}

@inproceedings{10.1609/aaai.v39i11.33267,
author = {Fei, Jingru and Yi, Kun and Fan, Wei and Zhang, Qi and Niu, Zhendong},
title = {Amplifier: bringing attention to neglected low-energy components in time series forecasting},
year = {2025},
isbn = {978-1-57735-897-8},
publisher = {AAAI Press},
doi = {10.1609/aaai.v39i11.33267},
booktitle = {Proceedings of the Thirty-Ninth AAAI Conference on Artificial Intelligence and Thirty-Seventh Conference on Innovative Applications of Artificial Intelligence and Fifteenth Symposium on Educational Advances in Artificial Intelligence},
articleno = {1294},
numpages = {9},
series = {AAAI'25/IAAI'25/EAAI'25}
}

@inproceedings{sparse,
author = {Lin, Shengsheng and Lin, Weiwei and Wu, Wentai and Chen, Haojun and Yang, Junjie},
title = {SparseTSF: modeling long-term time series forecasting with 1k parameters},
year = {2024},
publisher = {JMLR.org},
booktitle = {Proceedings of the 41st International Conference on Machine Learning},
articleno = {1216},
numpages = {16},
location = {Vienna, Austria},
series = {ICML'24}
}

@book{OTexts,
    title = "Forecasting: Principles and Practice",
    author = "Hyndman, Robin John and  Athanasopoulos, George",
    year = "2021",
    language = "English",
    publisher = "OTexts",
    address = "Australia",
    edition = "3rd",

}

@article{transformer,
  title={Attention is all you need},
  author={Vaswani, Ashish and Shazeer, Noam and Parmar, Niki and Uszkoreit, Jakob and Jones, Llion and Gomez, Aidan N and Kaiser, {\L}ukasz and Polosukhin, Illia},
  journal={Advances in neural information processing systems},
  volume={30},
  year={2017}
}

@inproceedings{wang2023timemixer,
  title={TimeMixer: Decomposable Multiscale Mixing for Time Series Forecasting},
  author={Wang, Shiyu and Wu, Haixu and Shi, Xiaoming and Hu, Tengge and Luo, Huakun and Ma, Lintao and Zhang, James Y and ZHOU, JUN},
  booktitle={International Conference on Learning Representations (ICLR)},
  year={2024}
}

@inproceedings{dlinear,
  title     = {Are Transformers Effective for Time Series Forecasting?},
  author    = {Zeng, Ailing and Chen, Muxi and Zhang, Lei and Xu, Qiang},
  booktitle = {Proceedings of the AAAI Conference on Artificial Intelligence},
  year      = {2023}
}

@inproceedings{10.5555/3454287.3455008,
  author    = {Paszke, Adam and Gross, Sam and Massa, Francisco and Lerer, Adam and Bradbury, James and Chanan, Gregory and Killeen, Trevor and Lin, Zeming and Gimelshein, Natalia and Antiga, Luca and Desmaison, Alban and K\"{o}pf, Andreas and Yang, Edward and DeVito, Zach and Raison, Martin and Tejani, Alykhan and Chilamkurthy, Sasank and Steiner, Benoit and Fang, Lu and Bai, Junjie and Chintala, Soumith},
  title     = {PyTorch: An Imperative Style, High-Performance Deep Learning Library},
  booktitle = {Advances in Neural Information Processing Systems (NeurIPS)},
  pages     = {8026--8037},
  year      = {2019}
}

@article{Kingma2014AdamAM,
  title={Adam: A Method for Stochastic Optimization},
  author={Diederik P. Kingma and Jimmy Ba},
  journal={CoRR},
  year={2014},
  volume={abs/1412.6980},
  url={https://api.semanticscholar.org/CorpusID:6628106}
}

@book{knuth1997taocp2,
  author    = {Knuth, Donald E.},
  title     = {The Art of Computer Programming, Volume 2:
               Seminumerical Algorithms},
  edition   = {3rd},
  publisher = {Addison-Wesley},
  year      = {1997}
}

@book{elaydi2005difference,
  author    = {Saber Elaydi},
  title     = {An Introduction to Difference Equations},
  edition   = {3},
  year      = {2005},
  publisher = {Springer},
  address   = {New York, NY},
  series    = {Undergraduate Texts in Mathematics},
  isbn      = {978-0-387-27602-1},
  doi       = {10.1007/0-387-27602-5}
}

@inproceedings{wu2023timesnet,
  title={TimesNet: Temporal 2D-Variation Modeling for General Time Series Analysis},
  author={Haixu Wu and Tengge Hu and Yong Liu and Hang Zhou and Jianmin Wang and Mingsheng Long},
  booktitle={International Conference on Learning Representations},
  year={2023},
}

@inproceedings{
    lin2024cyclenet,
    title={CycleNet: Enhancing Time Series Forecasting through Modeling Periodic Patterns},
    author={Shengsheng Lin and Weiwei Lin and Xinyi HU and Wentai Wu and Ruichao Mo and Haocheng Zhong},
    booktitle={The Thirty-eighth Annual Conference on Neural Information Processing Systems},
    year={2024},
}

@book{box1976time,
  title={Time Series Analysis: Forecasting and Control},
  author={Box, G.E.P. and Jenkins, G.M.},
  isbn={9780816211043},
  lccn={lc76008713},
  series={Holden-Day series in time series analysis and digital processing},
  url={https://books.google.fr/books?id=1WVHAAAAMAAJ},
  year={1976},
  publisher={Holden-Day}
}

@article{rb1990stl,
  title={STL: A seasonal-trend decomposition procedure based on loess},
  author={Cleveland, Robert B and Cleveland, William S and Terpenning, Irma},
  journal={Journal of Official Statistics},
  volume={6},
  pages={3--73},
  year={1990}
}

@article{NNFSTS,
    author = {Zhang, Peter and Qi, Min},
    year = {2005},
    month = {02},
    pages = {501-514},
    title = {Neural network forecasting for seasonal and trend time series},
    volume = {160},
    journal = {European Journal of Operational Research},
    doi = {10.1016/j.ejor.2003.08.037}
    }

@inproceedings{revin,
  title={Reversible instance normalization for accurate time-series forecasting against distribution shift},
  author={Kim, Taesung and Kim, Jinhee and Tae, Yunwon and Park, Cheonbok and Choi, Jang-Ho and Choo, Jaegul},
  booktitle={International Conference on Learning Representations},
  year={2021}
}

@inproceedings{lin2025TQNet,
  title={Temporal Query Network for Efficient Multivariate Time Series Forecasting}, 
  author={Lin, Shengsheng and Chen, Haojun and Wu, Haijie and Qiu, Chunyun and Lin, Weiwei},
  booktitle={Forty-second International Conference on Machine Learning},
  year={2025}
}

@InProceedings{pmlr-v267-huang25az,
  title = 	 {{T}ime{B}ase: The Power of Minimalism in Efficient Long-term Time Series Forecasting},
  author =       {Huang, Qihe and Zhou, Zhengyang and Yang, Kuo and Yi, Zhongchao and Wang, Xu and Wang, Yang},
  booktitle = 	 {Proceedings of the 42nd International Conference on Machine Learning},
  pages = 	 {26227--26246},
  year = 	 {2025},
  editor = 	 {Singh, Aarti and Fazel, Maryam and Hsu, Daniel and Lacoste-Julien, Simon and Berkenkamp, Felix and Maharaj, Tegan and Wagstaff, Kiri and Zhu, Jerry},
  volume = 	 {267},
  series = 	 {Proceedings of Machine Learning Research},
  month = 	 {13--19 Jul},
  publisher =    {PMLR},
  url = 	 {https://proceedings.mlr.press/v267/huang25az.html}
}

@inproceedings{ma2026mixlinear,
  title={MixLinear: Extreme Low-Resource Multivariate Time Series Forecasting with 0.1K Parameters},
  author={Ma, Aitian and Luo, Dongsheng and Sha, Mo},
  booktitle={International Conference on Learning Representations (ICLR)},
  year={2026}
}


\appendix

\section{Theoretical Proofs}

\subsection{Proof of Proposition~\ref{prop:convolution}}

Write $v_k:=\hat{z}_k+c_k$, so that the identity to prove,
Eq.~\eqref{eq:convolution}, reads $\hat{x}_{L+k}=\sum_{m=0}^{k-1}h_m v_{k-m}$.
Throughout we use the conventions $\delta_j=0$ for $j>P$ and $h_n=0$ for
$n<0$, and argue by strong induction on $k$.

\paragraph{Base case ($k=1$).}
Since $L\ge P$, every term $\hat{x}_{L+1-j}$ in the reconstruction
\eqref{eq:reverse} is an observed value $x_{L+1-j}$. Hence
\[
\hat{x}_{L+1}=\hat{z}_1+\sum_{j=1}^{P}\delta_j x_{L+1-j}
=\hat{z}_1+c_1=v_1=h_0 v_1,
\]
using $c_1=\sum_{j=1}^{P}\delta_j x_{L+1-j}$ and $h_0=1$, which is
Eq.~\eqref{eq:convolution} at $k=1$.

\paragraph{Inductive step.}
Fix $k\ge2$ and assume Eq.~\eqref{eq:convolution} for all indices
$1,\ldots,k-1$. In \eqref{eq:reverse}, the term $\hat{x}_{L+k-j}$ is a
previously reconstructed forecast when $k-j\ge1$ and an observed value
$x_{L+k-j}$ when $k-j\le0$. Splitting the sum at $j=k$, the observed part
is $\sum_{j=k}^{P}\delta_j x_{L+k-j}=c_k$, so
\[
\hat{x}_{L+k}=v_k+\sum_{j=1}^{k-1}\delta_j\,\hat{x}_{L+k-j}.
\]
Each remaining index satisfies $k-j\in\{1,\ldots,k-1\}$, so the induction
hypothesis gives $\hat{x}_{L+k-j}=\sum_{i=0}^{k-j-1}h_i v_{k-j-i}$.
Substituting and reindexing by $m=i+j$ (so $m$ runs over $1,\ldots,k-1$
and, for each $m$, $j$ over $1,\ldots,m$),
\[
\begin{aligned}
\sum_{j=1}^{k-1}\delta_j\sum_{i=0}^{k-j-1}h_i v_{k-j-i}
&=\sum_{m=1}^{k-1}\Bigl(\sum_{j=1}^{m}\delta_j h_{m-j}\Bigr)v_{k-m}\\
&=\sum_{m=1}^{k-1}h_m v_{k-m},
\end{aligned}
\]
the inner sum equaling $h_m$ by the recurrence defining $\mathbf{h}$. With
$h_0=1$,
\[
\hat{x}_{L+k}=v_k+\sum_{m=1}^{k-1}h_m v_{k-m}=\sum_{m=0}^{k-1}h_m v_{k-m},
\]
which is Eq.~\eqref{eq:convolution} at index $k$ and completes the
induction. \hfill$\qed$

\subsection{Proof of Proposition~\ref{prop:stability}}

\paragraph{Setup.}
The sequence $(h_n)$ with $h_0=1$, $h_n=0$ for $n<0$, and
$h_n=\sum_{j=1}^{P}\delta_j h_{n-j}$ ($n\ge1$) is the fundamental solution
of a $P$th-order constant-coefficient linear recurrence, with
characteristic polynomial
$\chi(\lambda)=\lambda^{P}-\sum_{j=1}^{P}\delta_j\lambda^{P-j}$. Let
$\lambda_1,\ldots,\lambda_r$ be its distinct roots, $m_1,\ldots,m_r$ their
multiplicities, and $\rho=\max_i|\lambda_i|$; these are the eigenvalues of
the companion matrix of $\boldsymbol{\delta}$. By the general solution of
such recurrences \cite[Cor.~2.24]{elaydi2005difference},
\begin{equation}
h_n=\sum_{i=1}^{r}p_i(n)\,\lambda_i^{\,n},
\qquad \deg p_i\le m_i-1 .
\label{eq:pf_gensol}
\end{equation}

\paragraph{(i) Case $\rho<1$.}
Fix $\rho'\in(\rho,1)$. Since $|\lambda_i|\le\rho$, each term of
\eqref{eq:pf_gensol} obeys $|p_i(n)\lambda_i^{\,n}|\le c_i\,n^{m_i-1}\rho^{n}$,
and $n^{m_i-1}(\rho/\rho')^{n}\to0$ gives $C>0$ with $|h_n|\le C\rho'^{\,n}$
for all $n$. Hence $\sum_{n\ge0}|h_n|\le C/(1-\rho')=:C_h<\infty$. Using
$\max_k|e_k|\le\|e\|$,
\[
\bigl|(T_\mathbf{h}^\top e)_t\bigr|
=\Bigl|\sum_{k=t}^{H}h_{k-t}e_k\Bigr|
\le\Bigl(\sum_{m\ge0}|h_m|\Bigr)\max_k|e_k|
\le C_h\,\|e\|,
\]
uniformly in  $H$; thus $|(T_\mathbf{h}^\top e)_t|=O(\|e\|)$.

\paragraph{(ii) Case $\rho=1$, unit-modulus roots simple.}
Roots with $|\lambda_i|<1$ contribute vanishing, hence bounded, terms. A
root with $|\lambda_i|=1$ is simple, so $m_i=1$ and $p_i$ is a constant.
Summing the finitely many terms of \eqref{eq:pf_gensol} gives $C>0$ with
$|h_n|\le C$ for all $n$.

\paragraph{(iii) Case $\rho>1$ (or $\rho=1$ with a repeated unit-modulus
root).}
Split the roots by modulus,
$\mathcal{D}=\{i:|\lambda_i|=\rho\}$, $\mathcal{S}=\{i:|\lambda_i|<\rho\}$,
and write $h_n=h^{\mathrm{dom}}_n+h^{\mathrm{sub}}_n$ accordingly. Let
$m=\max_{i\in\mathcal{D}}m_i$ and
$\mathcal{D}^{\star}=\{i\in\mathcal{D}:m_i=m\}$. We show
$\sup_n|h_n|=\infty$ in five steps; the crux (Step~1) is that the specific
initialization $h_0=1$, $h_n=0$ ($n<0$) forces the dominant modes to be
present.

\emph{Step 1 (every mode is present).}
Let $\mathcal{H}(x):=\sum_{n\ge0}h_nx^{n}$. Multiplying the recurrence of
$h_n$ by $x^{n}$ and summing over $n\ge1$, the left side is
$\sum_{n\ge1}h_nx^{n}=\mathcal{H}(x)-h_0=\mathcal{H}(x)-1$, while on the
right, exchanging the finite sum over $j$ with the sum over $n$ and
substituting $\ell=n-j$,
\[
\sum_{n\ge1}h_{n-j}x^{n}
=x^{j}\sum_{\ell\ge1-j}h_{\ell}x^{\ell}
=x^{j}\mathcal{H}(x),
\]
where the last equality holding because $h_{\ell}=0$ for $\ell<0$. Hence
$\mathcal{H}(x)-1=\mathcal{H}(x)\sum_{j=1}^{P}\delta_jx^{j}$, i.e.\
$\mathcal{H}(x)D(x)=1$ with $D(x):=1-\sum_{j=1}^{P}\delta_jx^{j}$, so
$\mathcal{H}(x)=1/D(x)$. The numerator is the constant $1$ exactly because
$h_0=1$ and $h_{\ell}=0$ for $\ell<0$; another initialization would leave
boundary terms $x^{j}\sum_{\ell=1-j}^{-1}h_{\ell}x^{\ell}$ and a nonconstant
numerator. As $D(x)=x^{P}\chi(1/x)$, the roots of $D$ are the $1/\lambda_i$
with multiplicities $m_i$ ($\lambda_i\neq0$ since $\delta_P\neq0$), and
$D(0)=1$ fixes the normalization, so
$\mathcal{H}(x)=\prod_{i}(1-\lambda_i x)^{-m_i}$. In its partial-fraction expansion
$\mathcal{H}(x)=\sum_i\sum_{t=1}^{m_i}A_{i,t}(1-\lambda_ix)^{-t}$, we extract
the top coefficient $A_{i,m_i}$ by multiplying through by
$(1-\lambda_ix)^{m_i}$. On the product form this cancels the $i$-th factor,
\[
(1-\lambda_ix)^{m_i}\mathcal{H}(x)
=\prod_{i'\neq i}(1-\lambda_{i'}x)^{-m_{i'}},
\]
while on the expansion it gives
\[
\begin{aligned}
(1-\lambda_ix)^{m_i}\mathcal{H}(x)
&=\sum_{t=1}^{m_i}A_{i,t}(1-\lambda_ix)^{m_i-t}\\
&+(1-\lambda_ix)^{m_i}\sum_{i'\neq i}\sum_{t}A_{i',t}(1-\lambda_{i'}x)^{-t}.
\end{aligned}
\]
Evaluating at $x=1/\lambda_i$, every term with $m_i-t>0$ vanishes and the
remainder vanishes through its factor $(1-\lambda_ix)^{m_i}$ (its poles
$1/\lambda_{i'}$ being finite there, as $\lambda_{i'}\neq\lambda_i$), so only
$t=m_i$ survives:
\[
A_{i,m_i}=\prod_{i'\neq i}(1-\lambda_{i'}/\lambda_i)^{-m_{i'}}\neq0,
\]
a finite product of nonzero factors.

Finally, using the negative-binomial
expansion
$(1-\lambda_ix)^{-t}=\sum_{n\ge0}\binom{n+t-1}{t-1}\lambda_i^{\,n}x^{n}$,
whose coefficient $\binom{n+t-1}{t-1}$ is a polynomial in $n$ of degree
$t-1$, and matching the coefficient of $x^{n}$ across the two forms of
$\mathcal{H}(x)$ (so that $h_n=\sum_i p_i(n)\lambda_i^{\,n}$) gives
\[
p_i(n)=\sum_{t=1}^{m_i}A_{i,t}\binom{n+t-1}{t-1},
\]
of degree exactly $m_i-1$: the $t=m_i$ term alone attains that degree, with
leading coefficient $a_i=A_{i,m_i}/(m_i-1)!\neq0$, and the lower-$t$ terms
(degree $<m_i-1$) cannot cancel it. Thus
$d:=\max_{i\in\mathcal{D}}\deg p_i=m-1\ge0$, and $a_i\neq0$ for every
$i\in\mathcal{D}^{\star}$.

\emph{Step 2 (a lemma on unit-modulus sums).}

\begin{lemma}\label{lem:expsum}
If $\mu_1,\ldots,\mu_s$ are distinct with $|\mu_r|=1$ and
$\alpha_1,\ldots,\alpha_s$ are not all zero, then
$g(n)=\sum_r\alpha_r\mu_r^{\,n}$ satisfies $|g(n)|\ge\epsilon$ on an
infinite set, for some $\epsilon>0$.
\end{lemma}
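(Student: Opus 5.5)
The approach is to use the averaged second moment: rather than showing $|g(n)|$ is large pointwise, I will show that its mean square over long windows stays bounded below by a positive constant. A bounded sequence cannot have this property unless it is large on infinitely many indices.

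The key computation comes first. For $N\ge1$, expand
\[
\frac1N\sum_{n=0}^{N-1}|g(n)|^2=\sum_{r}|\alpha_r|^2+\sum_{r\neq r'}\alpha_r\overline{\alpha_{r'}}\,\frac1N\sum_{n=0}^{N-1}(\mu_r\overline{\mu_{r'}})^{n}.
\]
For $r\neq r'$, the ratio $\omega=\mu_r\overline{\mu_{r'}}$ has modulus one and $\omega\neq1$, because the $\mu_r$ are distinct. The geometric sum is therefore bounded by $2/|1-\omega|$, so each cross term is $O(1/N)$. Hence the Cesàro mean of $|g(n)|^2$ converges to $S:=\sum_r|\alpha_r|^2$, and $S>0$ since the $\alpha_r$ are not all zero. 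The same argument applied to windows $n\in\{M,\ldots,M+N-1\}$ (shifting $n$ by $M$ only multiplies $\alpha_r$ by $\mu_r^M$, which does not change $|\alpha_r|$) shows that the window average exceeds $S/2$ for all $N\ge N_0$, uniformly in $M$. The $O(1/N)$ bound does not depend on $M$.

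The conclusion then follows. Set $\epsilon:=\sqrt{S/2}$. Suppose the set $\{n:|g(n)|\ge\epsilon\}$ were finite, contained in $[0,M)$. Then on the window $[M,M+N_0)$ every term satisfies $|g(n)|^2<S/2$, so the average is $<S/2$, which is a contradiction. Equivalently, every window of length $N_0$ contains an index with $|g(n)|^2\ge S/2$, which gives an infinite (indeed relatively dense) set.

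The only step that needs care is the uniformity in $M$ of the cross-term bound. It rests solely on $\omega\neq1$, and distinctness of the $\mu_r$ guarantees this. No number-theoretic input, such as Kronecker or Weyl equidistribution, is needed, so I expect no real obstacle.
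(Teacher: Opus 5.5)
Your proof is correct and is essentially the paper's argument. Both show that the Ces\`aro mean of $|g(n)|^2$ converges to $\sum_r|\alpha_r|^2>0$, because each cross term is a geometric sum in $\omega=\mu_r\overline{\mu_{r'}}\neq1$ bounded by $2/|1-\omega|$; the only difference is the last step, where you use shift-uniform window averages to get the explicit $\epsilon=\sqrt{S/2}$ and a relatively dense set, while the paper simply notes that $\limsup_n|g(n)|=0$ would force the average to vanish.
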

\begin{proof}
The Ces\`aro average $\frac1N\sum_{n\le N}|g(n)|^{2}$ tends to
$\sum_r|\alpha_r|^{2}>0$: diagonal terms give $|\alpha_r|^2$, while for
$r\neq r'$, $\omega=\mu_r\overline{\mu_{r'}}$ has $|\omega|=1$,
$\omega\neq1$, so $|\frac1N\sum_{n\le N}\omega^{n}|\le2/(N|1-\omega|)\to0$.
If $\limsup_n|g(n)|=0$ then $|g(n)|\to0$ and the average vanishes, a
contradiction; the claim follows.
\end{proof}
(The bound holds only along an infinite set: $g$ may vanish at individual
$n$, e.g.\ $g(n)=2\cos\theta n$.)

\emph{Step 3 (subdominant part negligible).}
If $\mathcal{S}=\emptyset$ this is trivial. Else let
$\rho_{\mathrm{sub}}=\max_{i\in\mathcal{S}}|\lambda_i|<\rho$. As every
$p_i$ has degree $\le P-1$, $|h^{\mathrm{sub}}_n|\le
Bn^{P-1}\rho_{\mathrm{sub}}^{n}$, so with $n^{-d}\le1$ ($n\ge1$),
\begin{equation}
\frac{|h^{\mathrm{sub}}_n|}{n^{d}\rho^{n}}
\le B\,n^{P-1}\bigl(\rho_{\mathrm{sub}}/\rho\bigr)^{n}\to0,
\label{eq:pf_subneg}
\end{equation}
since $n^{\beta}q^{n}\to0$ for any $q\in(0,1)$, $\beta\ge0$.

\emph{Step 4 (dominant rate).}
With $\mu_i=\lambda_i/\rho$ (distinct, unimodular),
\[
\frac{h^{\mathrm{dom}}_n}{n^{d}\rho^{n}}=g(n)+\eta(n),
\qquad
g(n)=\sum_{i\in\mathcal{D}^{\star}}a_i\mu_i^{\,n},
\]
where $\eta$ collects the degree-$\le d{-}1$ remainders of the
$\mathcal{D}^{\star}$ modes and all of
$\mathcal{D}\setminus\mathcal{D}^{\star}$ (so $\eta\equiv0$ if $d=0$). Each
such term is a polynomial of degree $\le d-1$ times a unimodular
$\mu_i^{n}$, hence $O(n^{-1})$ after dividing by $n^{d}$; so
$\eta(n)\to0$. By Step~1 the $a_i$ ($i\in\mathcal{D}^{\star}$) are nonzero,
so Lemma~\ref{lem:expsum} applies to $g$.

\emph{Step 5 (conclusion).}
Take $\epsilon>0$ and infinite $\mathcal{N}_0$ with $|g|\ge\epsilon$ there.
Choose $N_1$ with $|\eta(n)|\le\epsilon/4$ ($n\ge N_1$), giving
$|h^{\mathrm{dom}}_n|/(n^{d}\rho^{n})\ge3\epsilon/4$ on $\mathcal{N}_0$
beyond $N_1$; and $N_2$ with
$|h^{\mathrm{sub}}_n|/(n^{d}\rho^{n})\le\epsilon/4$ ($n\ge N_2$) by
\eqref{eq:pf_subneg}. On $\mathcal{N}=\{n\in\mathcal{N}_0:n\ge\max(N_1,N_2)\}$,
\[
\frac{|h_n|}{n^{d}\rho^{n}}
\ge\frac{3\epsilon}{4}-\frac{\epsilon}{4}=\frac{\epsilon}{2},
\]
so $|h_n|\ge\frac\epsilon2 n^{d}\rho^{n}$ on $\mathcal{N}$. Since
$\rho\ge1$ and $d\ge1$ whenever $\rho=1$ (a repeated unit root has
$m\ge2$), the bound diverges along $\mathcal{N}$; hence
$\sup_n|h_n|=\infty$. \hfill$\qed$
\subsection{Proof of Proposition~\ref{prop:sensitivity}}

\paragraph{Convolution identity.}
Since $h_0=1$ does not depend on $\boldsymbol{\delta}$, we have
$g_0^{(j)}=0$, and differentiating $h_n=\sum_{k=1}^{P}\delta_k h_{n-k}$
with respect to $\delta_j$ gives, for $n\ge1$,
\[
g_n^{(j)}=h_{n-j}+\sum_{k=1}^{P}\delta_k\,g_{n-k}^{(j)},
\qquad g_n^{(j)}=0\ \ (n<0),
\]
so $g^{(j)}$ obeys the same recurrence as $\mathbf{h}$, forced by
$h_{n-j}$. Let $\mathcal{G}^{(j)}(x):=\sum_{n\ge0}g_n^{(j)}x^{n}$.
Multiplying by $x^{n}$ and summing over $n\ge1$, the left side is
$\mathcal{G}^{(j)}(x)-g_0^{(j)}=\mathcal{G}^{(j)}(x)$, while the shift
identities $\sum_{n\ge1}h_{n-j}x^{n}=x^{j}\mathcal{H}(x)$ and
$\sum_{n\ge1}g^{(j)}_{n-k}x^{n}=x^{k}\mathcal{G}^{(j)}(x)$ (both by the
substitution used in Step~1, the negative-index terms vanishing) give
\[
\begin{aligned}
\mathcal{G}^{(j)}(x)
&=x^{j}\mathcal{H}(x)+\mathcal{G}^{(j)}(x)\sum_{k=1}^{P}\delta_kx^{k},\\
\text{i.e.}\quad
\mathcal{G}^{(j)}(x)D(x)&=x^{j}\mathcal{H}(x).
\end{aligned}
\]
Dividing by $D$ and substituting
$1/D=\mathcal{H}$,
\begin{equation}
\mathcal{G}^{(j)}(x)=\frac{x^{j}\mathcal{H}(x)}{D(x)}
=x^{j}\mathcal{H}(x)^{2},
\label{eq:pf_G}
\end{equation}
i.e.\ $g_n^{(j)}=(\mathbf{h}*\mathbf{h}^{(j)})_n$ with
$h_n^{(j)}=h_{n-j}$, since $\mathcal{H}^{2}$ generates $\mathbf{h}*\mathbf{h}$
and multiplication by $x^{j}$ shifts by $j$. Each item now follows from the
corresponding bound on $\mathbf{h}$, using
\begin{equation}
|g_n^{(j)}|\le\sum_{i=0}^{n}|h_i|\,|h_{n-i-j}| .
\label{eq:pf_gconv}
\end{equation}
Throughout we use
$|\phi_{ev}(n)|=\bigl|\sum_{k>n}e_kv_{k-n}\bigr|\le\|e\|\,\|\mathbf{v}\|$
(Cauchy--Schwarz).

\paragraph{(i) Case $\rho<1$.}
Fix $\rho'\in(\rho,1)$ and pick $\rho''\in(\rho,\rho')$. By
Proposition~\ref{prop:stability}(1), $|h_n|\le C\rho''^{\,n}$ for some
$C>0$, so \eqref{eq:pf_gconv} gives
$|g_n^{(j)}|\le C^{2}(n+1)\rho''^{\,n-j}\le C_g\rho'^{\,n}$, since
$(n+1)(\rho''/\rho')^{n}\to0$. Hence
$\sum_{n\ge0}|g_n^{(j)}|\le C_g/(1-\rho')=:C_g'<\infty$ and
\[
|\mathrm{Term~I}|
=\Bigl|\sum_{n=0}^{H-1}g_n^{(j)}\phi_{ev}(n)\Bigr|
\le C_g'\,\|e\|\,\|\mathbf{v}\|,
\]
independently of $H$.

\paragraph{(ii) Case $\rho=1$, unit-modulus roots simple.}
By Proposition~\ref{prop:stability}(ii), $|h_n|\le C$, so
\eqref{eq:pf_gconv} gives $|g_n^{(j)}|\le C^{2}(n+1)=O(n)$ and
\[
|\mathrm{Term~I}|
\le C^{2}\sum_{n=0}^{H-1}(n+1)\,\|e\|\,\|\mathbf{v}\|
=O\bigl(H^{2}\|e\|\,\|\mathbf{v}\|\bigr).
\]

\paragraph{(iii) Case $\rho>1$, or $\rho=1$ with a repeated unit-modulus
root.}
By \eqref{eq:pf_G},
$\mathcal{G}^{(j)}(x)=x^{j}\prod_{i=1}^{r}(1-\lambda_ix)^{-2m_i}$: the
poles are those of $\mathcal{H}$, at $1/\lambda_i$, with multiplicities
doubled to $2m_i$. The numerator $x^{j}$ vanishes only at $x=0$, and
$\lambda_i\neq0$, so no pole is cancelled. The argument of
Proposition~\ref{prop:stability}(3) therefore applies verbatim to
$g^{(j)}$ with $m_i$ replaced by $2m_i$: every mode is present with nonzero
dominant leading coefficients, and Lemma~\ref{lem:expsum} yields $c>0$ and
an infinite set $\mathcal{N}$ with
$|g_n^{(j)}|\ge c\,n^{2m-1}\rho^{n}$ on $\mathcal{N}$, where $m$ is the
maximal multiplicity of a root of modulus $\rho$. This bound diverges along
$\mathcal{N}$ in both sub-cases: if $\rho>1$ the exponential factor
diverges, while if $\rho=1$ a repeated unit root gives $m\ge2$, hence
$2m-1\ge3$ and $c\,n^{2m-1}\to\infty$. Therefore
$\sup_n|g_n^{(j)}|=\infty$.

\subsection{Proof of Theorem~\ref{thm:l1}}

Let $\lambda$ be any root of the characteristic polynomial
$\chi(\lambda)=\lambda^{P}-\sum_{j=1}^{P}\delta_j\lambda^{P-j}$, so that
$
\lambda^{P}=\sum_{j=1}^{P}\delta_j\lambda^{P-j}.
\label{eq:pf_l1_char}$
Suppose, for contradiction, that $|\lambda|\ge1$. Taking moduli 
and applying the triangle inequality,
\[
|\lambda|^{P}
\;\le\;\sum_{j=1}^{P}|\delta_j|\,|\lambda|^{P-j}
\;\le\;|\lambda|^{P-1}\sum_{j=1}^{P}|\delta_j|
\;=\;|\lambda|^{P-1}\,\|\boldsymbol{\delta}\|_1 ,
\]
where the second inequality uses $|\lambda|^{P-j}\le|\lambda|^{P-1}$ for
$j\ge1$, valid since $|\lambda|\ge1$. As $|\lambda|\ge1$ implies
$|\lambda|^{P-1}>0$, dividing by $|\lambda|^{P-1}$ gives
$|\lambda|\le\|\boldsymbol{\delta}\|_1<1$, contradicting $|\lambda|\ge1$.

Hence every root satisfies $|\lambda|<1$, and since $\rho$ is the largest
root modulus, $\rho<1$. \hfill$\qed$

\subsection{Proof of Proposition~\ref{prop:autocov}}

\paragraph{Setting.}
We work at the population level under the data-generating process, taking
$\mathbf{X}:=(x_{1-P},\ldots,x_{L+H})^{\top}$, so every lagged value
required by the differencing operator and by the shifts
$\mathbf{x}^{(j)}$ is an observation of $(x_t)$ and
$\mathbb{E}[x_ux_v]=\gamma(u-v)$ holds throughout.

\paragraph{Step 1: everything is affine in $\mathbf{X}$.}
With a linear backbone, $\mathbf{z}=A\mathbf{X}$ (from \eqref{eq:diff}),
$\mathbf{c}=C\mathbf{X}$ and $\mathbf{x}_{\mathrm{f}}=S\mathbf{X}$ for
matrices determined by $\boldsymbol{\delta}$. By \eqref{eq:backbone} and
Proposition~\ref{prop:convolution},
$\hat{\mathbf{x}}_{\mathrm{f}}=T_{\mathbf{h}}(W_bA\mathbf{X}+\mathbf{b}+C\mathbf{X})$,
so $e=M\mathbf{X}+\mathbf{b}'$ with $M:=T_{\mathbf{h}}(W_bA+C)-S$ and
$\mathbf{b}':=T_{\mathbf{h}}\mathbf{b}$, both independent of $\mathbf{X}$.
Hence
\begin{equation}
\mathbf{p}^{\top}=(T_{\mathbf{h}}^{\top}e)^{\top}W_b
=\mathbf{X}^{\top}N+\mathbf{q}^{\top},
\qquad
N:=M^{\top}T_{\mathbf{h}}W_b,
\label{eq:pf_p_linear}
\end{equation}
whose entries $N_{v,s}$ are indexed by the coordinates
$v\in\{1-P,\ldots,L+H\}$ of $\mathbf{X}$ and $s\in\{1,\ldots,L-1\}$ of
$\mathbf{z}$.

\paragraph{Step 2: reduction to second moments.}
Since $(x^{(j)}_{\scriptscriptstyle 2:L})_s=x_{s+1-j}$, and
$s+1-j\ge 2-P$ lies in the range of $\mathbf{X}$, we may write
$\mathbf{x}^{(j)}_{\scriptscriptstyle 2:L}=E_j\mathbf{X}$ with
$(E_j)_{s,u}=\mathbf{1}\{u=s+1-j\}$. By \eqref{eq:pf_p_linear},
\[
\mathbf{p}^{\top}\mathbf{x}^{(j)}_{\scriptscriptstyle 2:L}
=\mathbf{X}^{\top}NE_j\mathbf{X}+\mathbf{q}^{\top}E_j\mathbf{X}.
\]
Taking expectations, the linear term vanishes as $\mathbb{E}[\mathbf{X}]=0$,
and $\mathbb{E}[\mathbf{X}^{\top}B\mathbf{X}]=\operatorname{tr}(B\Gamma)$
for deterministic $B$ gives
$\mathbb{E}[\mathbf{p}^{\top}\mathbf{x}^{(j)}_{\scriptscriptstyle 2:L}]
=\operatorname{tr}(NE_j\Gamma)$, with
$\Gamma_{u,v}=\mathbb{E}[x_ux_v]=\gamma(u-v)$.

\paragraph{Step 3: extracting the lag.}
Expanding the trace entrywise and using $(E_j)_{s,u}=\mathbf{1}\{u=s+1-j\}$
to collapse the sum over $u$:
\begin{align*}
\operatorname{tr}(NE_j\Gamma)
&=\sum_{s=1}^{L-1}\ \sum_{v=1-P}^{L+H}N_{v,s}\,\Gamma_{s+1-j,\,v}\\
&=\sum_{s,v}N_{v,s}\,\gamma(s+1-j-v).
\end{align*}
Setting $m:=v-s-1$, so that $s+1-j-v=-(m+j)$, and using
$\gamma(-k)=\gamma(k)$,
\begin{equation}
\mathbb{E}\bigl[\mathbf{p}^{\top}\mathbf{x}^{(j)}_{\scriptscriptstyle 2:L}\bigr]
=\sum_{m=1-P-L}^{L+H-2}\Phi(m)\,\gamma(m+j),
\label{eq:pf_phi}
\end{equation}
where $\Phi(m):=\sum_{v-s-1=m}N_{v,s}$, the endpoints of the range being
attained at $(v,s)=(1-P,L-1)$ and $(L+H,1)$. The weights $\Phi(m)$ are
sums of entries of $N$, hence depend on
$(\boldsymbol{\delta},W_b,\mathbf{b})$ but not on $j$: the lag enters only
through the argument of $\gamma$.

\section{More details of AdaRDiff}
\label{app:pseudocode}
\subsection{Implementation Details}

\paragraph{Overview.}
AdaRDiff wraps a forecasting backbone in two operations that share the same
learnable weights $\boldsymbol{\delta}\in\mathbb{R}^{C\times P}$: a causal
differencing pass on the input, and a reconstruction pass on the backbone
output (Algorithm~\ref{alg:forward}). Both are implemented as depthwise \texttt{conv1d} over the channel
dimension, so the module adds $C\!\times\!P$ parameters and no sequential
work beyond the $O(\log H)$ recurrence of Algorithm~\ref{alg:h}. Training
proceeds in two phases (Section~\ref{sec:training}): Phase~1 supervises the
backbone in residual space, bypassing reconstruction entirely; Phase~2
optimizes the forecast loss through the full pipeline.

\begin{algorithm}[t]
\caption{AdaRDiff forward pass}
\label{alg:forward}
\begin{algorithmic}[1]
\REQUIRE context $\mathbf{x}\in\mathbb{R}^{C\times L}$, phase
$\in\{1,2\}$; target $\mathbf{x}_{\mathrm{f}}\in\mathbb{R}^{C\times H}$
\STATE initialize differencing weights $\boldsymbol{\delta}$, gain
$\alpha\leftarrow1$, and backbone parameters $\theta$
\STATE \textbf{if} reparam \textbf{then}
$\boldsymbol{\delta}\leftarrow\alpha\,\boldsymbol{\delta}/
\|\boldsymbol{\delta}\|_1$ \COMMENT{per channel, Eq.~\eqref{eq:reparam}}
\STATE \textbf{if} RevIN \textbf{then}
$\mathbf{x}\leftarrow(\mathbf{x}-\mu)/\sigma$, with $\mu,\sigma$ the
per-channel mean and standard deviation of $\mathbf{x}$
\STATE $\mathbf{z}\leftarrow\mathbf{x}[:,1{:}]-
\texttt{conv1d}(\mathbf{x},\boldsymbol{\delta})$
\COMMENT{differencing, Eq.~\eqref{eq:diff}}
\STATE $\hat{\mathbf{z}}\leftarrow f_\theta(\mathbf{z})$
\COMMENT{predicted residuals}
\IF{phase $=1$}
  \STATE \textbf{if} RevIN \textbf{then}
  $\mathbf{x}_{\mathrm{f}}\leftarrow(\mathbf{x}_{\mathrm{f}}-\mu)/\sigma$
  \COMMENT{same statistics as $\mathbf{x}$}
  \STATE $\mathbf{z}_{\mathrm{f}}\leftarrow\mathbf{x}_{\mathrm{f}}-
  \texttt{conv1d}([\,\mathbf{x}[:,-P{:}],\,\mathbf{x}_{\mathrm{f}}\,],
  \boldsymbol{\delta})$
  \COMMENT{difference the target}
  \RETURN $(\hat{\mathbf{z}},\mathbf{z}_{\mathrm{f}})$
  \COMMENT{loss $\|\hat{\mathbf{z}}-\mathbf{z}_{\mathrm{f}}\|^{2}$}
\ELSE
  \STATE $\mathbf{h}\leftarrow\textsc{Sequence-}\mathbf{h}
  (\boldsymbol{\delta},H)$ \COMMENT{Alg.~\ref{alg:h}}
  \STATE $\mathbf{c}\leftarrow\texttt{conv1d}(\mathbf{x}[:,-P{:}],
  \boldsymbol{\delta})$ \COMMENT{initial conditions}
  \STATE $\hat{\mathbf{x}}_{\mathrm{f}}\leftarrow
  \texttt{conv1d}(\hat{\mathbf{z}}+\mathbf{c},\mathbf{h})$
  \COMMENT{reconstruction, Prop.~\ref{prop:convolution}}
  \STATE \textbf{if} RevIN \textbf{then} $\hat{\mathbf{x}}_{\mathrm{f}}
  \leftarrow\sigma\hat{\mathbf{x}}_{\mathrm{f}}+\mu$
  \RETURN $\hat{\mathbf{x}}_{\mathrm{f}}$
  \COMMENT{loss
  $\|\hat{\mathbf{x}}_{\mathrm{f}}-\mathbf{x}_{\mathrm{f}}\|^{2}$}
\ENDIF
\end{algorithmic}
\end{algorithm}

\paragraph{Reconstruction components.}
Reconstruction needs three ingredients:

\emph{The sequence $\mathbf{h}$.} Stacking $P$ consecutive terms into the
state $s_n=(h_n,\ldots,h_{n-P+1})^{\top}$ turns the recurrence
\eqref{eq:recurrence} into a one-step linear map $s_{n+1}=Fs_n$, where $F$
is the companion matrix of $\boldsymbol{\delta}$; hence $s_n=F^{n}s_0$ with
$s_0=(1,0,\ldots,0)^{\top}$, and $h_n=(s_n)_1$. Computing $s_0,\ldots,s_{H-1}$
one at a time costs $H$ sequential steps. Repeated squaring reduces this to
$\lceil\log_2 H\rceil$: the key identity is that $F^{\ell}$ advances any
state by $\ell$ positions, $F^{\ell}s_n=s_{n+\ell}$. So if we have already
computed the first $\ell$ states $s_0,\ldots,s_{\ell-1}$ and the matrix
$G=F^{\ell}$, applying $G$ to all of them at once yields the next $\ell$
states $s_\ell,\ldots,s_{2\ell-1}$; squaring $G$ then gives $F^{2\ell}$ for
the following round. Each round doubles the number of known states, so
$\lceil\log_2 H\rceil$ rounds cover the whole horizon, and within a round
the single matrix product $G\,[s_0\cdots s_{\ell-1}]$ produces all new
states in parallel (Algorithm~\ref{alg:h}).

\emph{The initial conditions $\mathbf{c}$.} The terms
$c_k=\sum_{j\ge k}\delta_jx_{L+k-j}$ are a single convolution of the last
$P$ context values with $\boldsymbol{\delta}$, zero-padded so that taps
reaching past the context contribute nothing.

\emph{The forecast.} With $\mathbf{v}=\hat{\mathbf{z}}+\mathbf{c}$, the
reconstruction $\hat{x}_{L+k}=\sum_{m<k}h_m v_{k-m}$ is the causal
convolution of $\mathbf{v}$ with $\mathbf{h}$, i.e.\ the Toeplitz product
$T_{\mathbf{h}}\mathbf{v}$.

\begin{algorithm}[ht]
\caption{$\textsc{Sequence-}\mathbf{h}(\boldsymbol{\delta},H)$ by repeated
squaring}
\label{alg:h}
\begin{algorithmic}[1]
\STATE $F\leftarrow$ companion matrix of $\boldsymbol{\delta}$
\STATE $S\leftarrow[\,s_0\,]$ with $s_0=(1,0,\ldots,0)^{\top}$;\quad
$G\leftarrow F$ \COMMENT{$S$ holds $s_0,\ldots,s_{|S|-1}$; $G=F^{|S|}$}
\WHILE{$|S|<H$}
  \STATE $S\leftarrow[\,S,\;G\,S\,]$
  \COMMENT{$G\,s_n=s_{n+|S|}$: append the next $|S|$ states}
  \STATE $G\leftarrow G^{2}$
  \COMMENT{$F^{\ell}\!\to\!F^{2\ell}$ for the next round}
\ENDWHILE
\RETURN $h_n=(s_n)_1$ for $n=0,\ldots,H-1$
\end{algorithmic}
\end{algorithm}

\paragraph{Options.}
Two components are optional hyperparameters. RevIN normalizes each channel
before differencing and inverts the transform on the forecast, handling
distribution shift. The reparameterization of line~2 constrains
$\|\boldsymbol{\delta}\|_1=|\alpha|$ with $\alpha$ learned per channel.

\section{Supplementary Details of the Experiments}
\label{app:suppl-details}
\subsection{Experiment Settings}
\paragraph{Datasets.}
We evaluate on widely adopted LTSF benchmarks: the ETT series (ETTh1, ETTh2,
ETTm1, ETTm2)~\cite{informer}, Weather, Traffic, Electricity, and
Solar-Energy~\cite{10.1145/3209978.3210006}. Dataset splitting and
normalization follow prior work~\cite{liu2024itransformer,lin2024cyclenet}:
the ETT datasets use a $6\!:\!2\!:\!2$ train/validation/test split, and the
remaining datasets a $7\!:\!1\!:\!2$ split. Table~\ref{tab:datasets}
summarizes each dataset. All exhibit periodic structure---daily, and
additionally weekly for Electricity and Traffic. Combined with the sampling
frequency, this determines the dominant period of each dataset, e.g.\ $24$
steps for the hourly ETTh variants and $168$ for Electricity. These periods
can be inferred from an autocorrelation (ACF) analysis, following the
procedure of CycleNet~\cite{lin2024cyclenet}, and we use them to guide the
choice of window size $P$.
\paragraph{Hyperparameter search.}
All hyperparameters are selected per dataset on the validation set, by grid
search over the following ranges. The window size $P$ is tuned as described
in the main text ($\{4,24,96\}$ on the ETT datasets, the dominant period
elsewhere). The initialization of the differencing weights
$\boldsymbol{\delta}$ is chosen among zero
($\boldsymbol{\delta}=\mathbf{0}$), uniform ($\delta_j=1/P$), and
first-order ($\delta_1=1$, else $0$); RevIN~\cite{revin} and the
$\ell_1$-reparameterization of Eq.~\eqref{eq:reparam} are each toggled on or
off. Optimization hyperparameters are swept over initial learning rates
$\{1\!\times\!10^{-3}, 2\!\times\!10^{-3}, 5\!\times\!10^{-3},
1\!\times\!10^{-2}\}$ and batch sizes $\{32, 64, 256\}$. For the MLP
backbone, the hidden dimension is additionally selected from
$\{128, 512, 1024\}$.

\begin{table}[h]
\centering
\setlength{\tabcolsep}{5pt}
\renewcommand{\arraystretch}{1.2}
\scalebox{0.72}{
\begin{tabular}{l|cccccc}
\toprule
\textbf{Dataset} & \makecell{ETTh1\\ETTh2} & \makecell{ETTm1\\ETTm2} & Electricity & \makecell{Solar-\\Energy} & Traffic & Weather \\
\midrule
Timesteps   & 17{,}420 & 69{,}680 & 26{,}304 & 52{,}560 & 17{,}544 & 52{,}696 \\
Channels    & 7        & 7        & 321      & 137      & 862      & 21 \\
Frequency   & 1\,h     & 15\,min  & 1\,h     & 10\,min  & 1\,h     & 10\,min \\
Periodicity & Daily    & Daily    & \makecell{Daily,\\Weekly} & Daily & \makecell{Daily,\\Weekly} & Daily \\
\makecell[l]{Dominant\\Period} & 24 & 96 & 168 & 144 & 168 & 144 \\
\bottomrule
\end{tabular}}
\caption{Dataset information. The dominant period is given in timesteps;
for datasets with both daily and weekly cycles it is the longer (weekly)
period.}
\label{tab:datasets}
\end{table}
 
\subsection{More experimental results}

\paragraph{Comparison under the unified TFB evaluation.}
Table~\ref{tab:multivariate_main} reports a complementary comparison under
the unified evaluation protocol of the Time Series Forecasting Benchmark
(TFB)~\cite{qiu2024tfb}, which standardizes datasets, splits, and metrics
across methods, with each method is given its best result over
look-back lengths $\{96,336,512\}$ rather than a fixed look-back window. This
setting favors strong baselines, since each is tuned to its most
advantageous context length; we nonetheless evaluate AdaRDiff over the same
lengths and report its best result for a fair comparison. Even so, AdaRDiff
attains the best accuracy on the large majority of dataset--horizon pairs:
its linear and MLP backbones together take the top MSE on $29$ of $35$
settings and the top MAE on $22$, dominating the first-place count over
every competing method, including the recent PhaseFormer~\cite{niu2026phaseformer}, MixLinear~\cite{ma2026mixlinear}, and
Amplifier~\cite{10.1609/aaai.v39i11.33267}. On dataset-level average MSE, AdaRDiff improves over the strongest
baseline on six of the seven datasets, with the largest margin on ETTh2
($0.331$ against $0.347$ for MixLinear), followed by Weather ($0.215$ against
$0.222$) and ETTm2 ($0.249$ against $0.256$). The sole exception is Traffic,
where PatchTST edges ahead ($0.397$ against $0.399$). These results show that
AdaRDiff performs consistently across look-back lengths and evaluation
setups, remaining state-of-the-art with only a lightweight linear or MLP
backbone.
\begin{figure*}[t]
    \centering
    \includegraphics[width=0.9\textwidth]{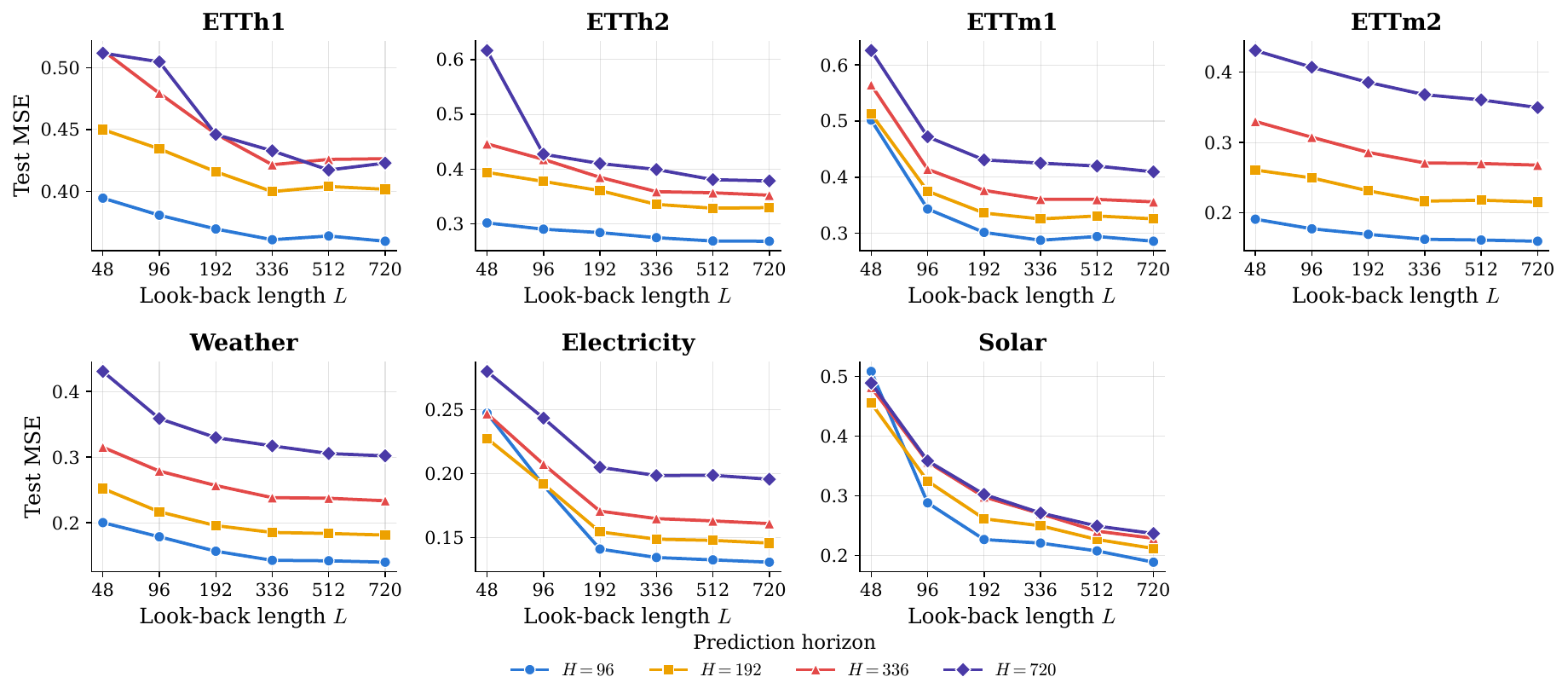}
    \caption{Effect of the look-back length $L$ on AdaRDiff-Linear's test MSE,
    for each dataset and forecast horizon $H\in\{96,192,336,720\}$.}
    \label{fig:contextlen_effect}
\end{figure*}
\paragraph{Effect of the look-back length.}
Figure~\ref{fig:contextlen_effect} reports AdaRDiff-Linear's test MSE as the
look-back length $L$ varies over $\{48,96,192,336,512,720\}$ at each forecast
horizon. Several consistent trends emerge. First, accuracy improves almost
monotonically with $L$: a longer history gives the differencing operator more
context to identify the relevant lags, and the shortest window ($L=48$) is
the weakest on every dataset. Second, these gains saturate quickly at short
horizons: beyond $L\!\approx\!336$ the $H=96$ curves are nearly flat, and on
ETTh1 the longer $L=512$ window is no better than $L=336$. Third, and more
notably, the benefit of a longer look-back grows with the forecast horizon:
extending $L$ from $48$ to $720$ reduces MSE far more at $H=720$ than at
$H=96$ (e.g.\ a drop of $0.089$ versus $0.035$ on ETTh1, and $0.081$ versus
$0.031$ on ETTm2), since forecasting further ahead requires more history to
anchor the trend and seasonal structure. Datasets with longer periodic structure, such as
Electricity and Solar-Energy, continue to benefit from windows up to $L=720$,
consistent with their weekly and daily cycles. Overall, AdaRDiff extracts
most of its short-horizon accuracy from a moderate look-back while making
effective use of longer histories precisely where they matter most---at long
horizons.
\paragraph{Qualitative forecasts.}
Figure~\ref{fig:forecast_panels} shows AdaRDiff-Linear's forecasts (horizon $H=96$) against the ground truth on representative channels
from six datasets. The forecasts follow the dominant temporal structure of
each series: the sharp recurring spikes of Traffic, the daily cycles of
Electricity and the ETT variants, and the day--night plateaus of
Solar-Energy, where AdaRDiff correctly holds the low nighttime level and
rises on the daytime ramp. Errors appear mainly at the peaks of the most
irregular channels (e.g.\ Traffic channel~0), where the amplitude of
individual spikes is occasionally underestimated, but the phase and period
of the pattern are consistently recovered. These examples illustrate how the
learned differencing operator reconstructs both periodic and trending
behavior directly in the original series space.
\begin{figure*}[t]
    \centering
    \includegraphics[width=\textwidth]{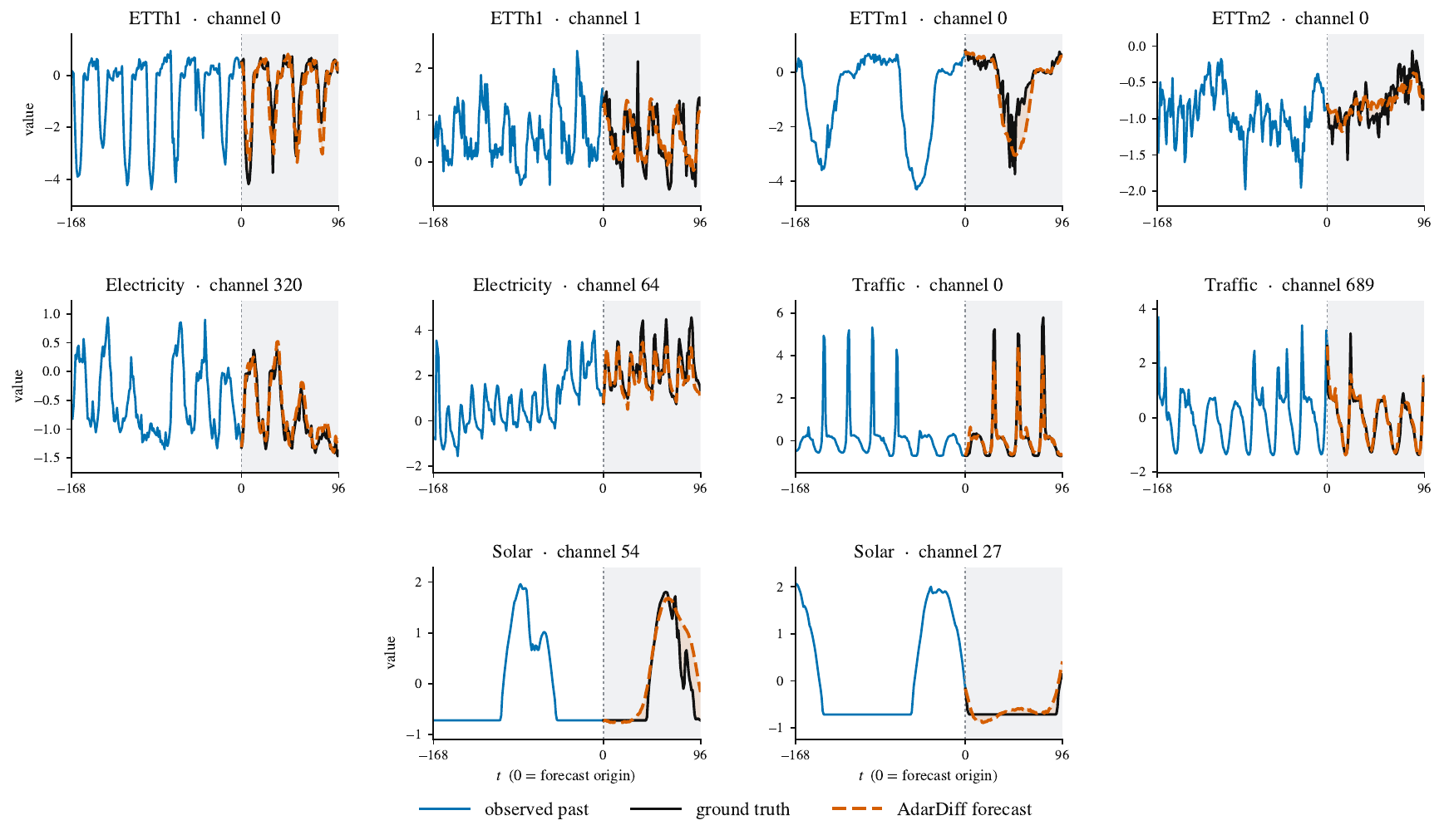}
    \caption{Qualitative forecasts of AdaRDiff-Linear (horizon $H=96$) on representative channels from six datasets. Each panel
    shows the observed past (blue), the ground-truth future (black), and the
    AdaRDiff forecast (orange dashed); the shaded region marks the forecast
    horizon.}
    \label{fig:forecast_panels}
\end{figure*}

\definecolor{OursColor}{HTML}{8B0000}
\begin{table*}[ht]
    \centering
    {\fontsize{10pt}{10pt}\selectfont
    \scalebox{0.52}{
    \begin{tabular}{l | c |c c|c c|c c|c c|c c|c c|c c|c c|c c|c c|c c|c c}
    \toprule[2pt]
       \multicolumn{2}{c|}{Models} & \multicolumn{2}{c|}{\makecell{\textcolor{OursColor}{\textbf{AdaRDiff-Linear}}\\(2026)}} & \multicolumn{2}{c|}{\makecell{\textcolor{OursColor}{\textbf{AdaRDiff-MLP}}\\(2026)}}
       & \multicolumn{2}{c|}{\makecell{PhaseFormer\\(\citeyear{niu2026phaseformer})}} 
       & \multicolumn{2}{c|}{\makecell{MixLinear\\(\citeyear{ma2026mixlinear})}}
       & \multicolumn{2}{c|}{\makecell{Amplifier\\(\citeyear{10.1609/aaai.v39i11.33267})}} 
       & \multicolumn{2}{c|}{\makecell{iTransformer\\(\citeyear{liu2024itransformer})}}
       & \multicolumn{2}{c|}{\makecell{TimeMixer\\(\citeyear{wang2023timemixer})}}
       &
       \multicolumn{2}{c|}{\makecell{Crossformer\\(\citeyear{zhang2023crossformer})}}
       & \multicolumn{2}{c|}{\makecell{DLinear\\(\citeyear{dlinear})}} & \multicolumn{2}{c|}{\makecell{PatchTST\\(\citeyear{patchtst})}} & \multicolumn{2}{c|}{\makecell{TimesNet\\(\citeyear{wu2023timesnet})}} & \multicolumn{2}{c}{\makecell{stationary\\(\citeyear{liu2022nonstationary})}} \\
       \cmidrule(lr){1-2}\cmidrule(lr){3-26}
       \multicolumn{2}{c|}{Metrics} & MSE & MAE & MSE & MAE & MSE & MAE & MSE & MAE & MSE & MAE & MSE & MAE & MSE & MAE & MSE & MAE & MSE & MAE & MSE & MAE & MSE & MAE & MSE & MAE \\
       \midrule
         & 96   & \textcolor{red}{\textbf{0.355}} & \textcolor{red}{\textbf{0.386}} & \underline{\textcolor{blue}{0.360}} & 0.391 & 0.386 & 0.397 & 0.365 & \underline{\textcolor{blue}{0.387}} & 0.376 & 0.393 & 0.386 & 0.405 & 0.372 & 0.401 & 0.411 & 0.435 & 0.379 & 0.403 & 0.377 & 0.397 & 0.389 & 0.412 & 0.591 & 0.524   \\
         & 192   & \textcolor{red}{\textbf{0.393}} & \underline{\textcolor{blue}{0.411}} & 0.409 & 0.423 & 0.412 & 0.420 & \underline{\textcolor{blue}{0.397}} & \textcolor{red}{\textbf{0.406}} & 0.414 & 0.420 & 0.424 & 0.440 & 0.413 & 0.430 & 0.409 & 0.438 & 0.408 & 0.419 & 0.409 & 0.425 & 0.440 & 0.443 & 0.615 & 0.540   \\
         & 336   & \underline{\textcolor{blue}{0.421}} & \underline{\textcolor{blue}{0.426}} & 0.428 & 0.435 & 0.442 & 0.431 & \textcolor{red}{\textbf{0.418}} & \textcolor{red}{\textbf{0.421}} & 0.442 & 0.446 & 0.449 & 0.460 & 0.438 & 0.450 & 0.433 & 0.457 & 0.440 & 0.440 & 0.431 & 0.444 & 0.523 & 0.487 & 0.632 & 0.551   \\
         & 720   & 0.417 & 0.444 & 0.454 & 0.460 & \textcolor{red}{\textbf{0.414}} & \underline{\textcolor{blue}{0.440}} & \underline{\textcolor{blue}{0.416}} & \textcolor{red}{\textbf{0.437}} & 0.480 & 0.479 & 0.495 & 0.487 & 0.486 & 0.484 & 0.501 & 0.514 & 0.471 & 0.493 & 0.457 & 0.477 & 0.521 & 0.495 & 0.828 & 0.658   \\
         \cmidrule(lr){2-26}
       \multirow{-5}*{\rotatebox{90}{ETTh1}}  & Avg   & \textcolor{red}{\textbf{0.397}} & \underline{\textcolor{blue}{0.417}} & 0.413 & 0.427 & 0.413 & 0.422 & \underline{\textcolor{blue}{0.399}} & \textcolor{red}{\textbf{0.413}} & 0.428 & 0.434 & 0.439 & 0.448 & 0.427 & 0.441 & 0.439 & 0.461 & 0.424 & 0.439 & 0.419 & 0.436 & 0.468 & 0.459 & 0.666 & 0.568   \\
         \midrule
         & 96   & \textcolor{red}{\textbf{0.267}} & \textcolor{red}{\textbf{0.333}} & \underline{\textcolor{blue}{0.268}} & \underline{\textcolor{blue}{0.337}} & 0.284 & 0.350 & 0.291 & 0.343 & 0.291 & 0.342 & 0.297 & 0.348 & 0.281 & 0.351 & 0.728 & 0.603 & 0.300 & 0.364 & 0.274 & \underline{\textcolor{blue}{0.337}} & 0.334 & 0.370 & 0.347 & 0.387   \\
         & 192   & \textcolor{red}{\textbf{0.325}} & \textcolor{red}{\textbf{0.373}} & \underline{\textcolor{blue}{0.329}} & \underline{\textcolor{blue}{0.375}} & 0.347 & 0.387 & 0.344 & 0.380 & 0.355 & 0.400 & 0.372 & 0.403 & 0.349 & 0.387 & 0.723 & 0.607 & 0.387 & 0.423 & 0.348 & 0.384 & 0.404 & 0.413 & 0.379 & 0.418   \\
         & 336   & \textcolor{red}{\textbf{0.351}} & \textcolor{red}{\textbf{0.396}} & \underline{\textcolor{blue}{0.358}} & 0.403 & 0.374 & 0.408 & 0.365 & \underline{\textcolor{blue}{0.400}} & 0.384 & 0.420 & 0.388 & 0.417 & 0.366 & 0.413 & 0.740 & 0.628 & 0.490 & 0.487 & 0.377 & 0.416 & 0.389 & 0.435 & \underline{\textcolor{blue}{0.358}} & 0.413   \\
         & 720   & \textcolor{red}{\textbf{0.380}} & \textcolor{red}{\textbf{0.423}} & \underline{\textcolor{blue}{0.388}} & \underline{\textcolor{blue}{0.431}} & 0.413 & 0.439 & 0.389 & \textcolor{red}{\textbf{0.423}} & 0.422 & 0.451 & 0.424 & 0.444 & 0.401 & 0.436 & 1.386 & 0.882 & 0.704 & 0.597 & 0.406 & 0.441 & 0.434 & 0.448 & 0.422 & 0.457   \\
         \cmidrule(lr){2-26}
       \multirow{-5}*{\rotatebox{90}{ETTh2}}  & Avg   & \textcolor{red}{\textbf{0.331}} & \textcolor{red}{\textbf{0.381}} & \underline{\textcolor{blue}{0.336}} & \underline{\textcolor{blue}{0.387}} & 0.354 & 0.396 & 0.347 & 0.387 & 0.363 & 0.403 & 0.370 & 0.403 & 0.349 & 0.397 & 0.894 & 0.680 & 0.470 & 0.468 & 0.351 & 0.395 & 0.390 & 0.416 & 0.377 & 0.419   \\
         \midrule
         & 96   & \textcolor{red}{\textbf{0.285}} & \textcolor{red}{\textbf{0.336}} & \underline{\textcolor{blue}{0.289}} & \underline{\textcolor{blue}{0.342}} & 0.296 & 0.349 & 0.332 & 0.361 & 0.293 & 0.347 & 0.300 & 0.353 & 0.293 & 0.345 & 0.314 & 0.367 & 0.300 & 0.345 & \underline{\textcolor{blue}{0.289}} & 0.343 & 0.340 & 0.378 & 0.415 & 0.410   \\
         & 192   & \textcolor{red}{\textbf{0.325}} & \textcolor{red}{\textbf{0.359}} & \underline{\textcolor{blue}{0.327}} & 0.368 & 0.335 & 0.375 & 0.346 & 0.370 & 0.329 & 0.367 & 0.341 & 0.380 & 0.335 & 0.372 & 0.374 & 0.410 & 0.336 & \underline{\textcolor{blue}{0.366}} & 0.329 & 0.368 & 0.392 & 0.404 & 0.494 & 0.451   \\
         & 336   & \textcolor{red}{\textbf{0.359}} & \textcolor{red}{\textbf{0.380}} & \underline{\textcolor{blue}{0.361}} & \underline{\textcolor{blue}{0.382}} & 0.364 & 0.389 & 0.389 & 0.392 & 0.365 & 0.387 & 0.374 & 0.396 & 0.368 & 0.386 & 0.413 & 0.432 & 0.367 & 0.386 & 0.362 & 0.390 & 0.423 & 0.426 & 0.577 & 0.490   \\
         & 720   & \textcolor{red}{\textbf{0.415}} & \textcolor{red}{\textbf{0.412}} & 0.426 & 0.418 & 0.417 & 0.419 & 0.430 & 0.418 & 0.429 & 0.422 & 0.429 & 0.430 & 0.426 & 0.417 & 0.753 & 0.613 & 0.419 & \underline{\textcolor{blue}{0.416}} & \underline{\textcolor{blue}{0.416}} & 0.423 & 0.475 & 0.453 & 0.636 & 0.535   \\
         \cmidrule(lr){2-26}
       \multirow{-5}*{\rotatebox{90}{ETTm1}}  & Avg   & \textcolor{red}{\textbf{0.346}} & \textcolor{red}{\textbf{0.372}} & 0.351 & \underline{\textcolor{blue}{0.378}} & 0.353 & 0.383 & 0.374 & 0.385 & 0.354 & 0.381 & 0.361 & 0.390 & 0.355 & 0.380 & 0.464 & 0.455 & 0.356 & 0.378 & \underline{\textcolor{blue}{0.349}} & 0.381 & 0.407 & 0.415 & 0.530 & 0.472   \\
         \midrule
         & 96   & \textcolor{red}{\textbf{0.161}} & \textcolor{red}{\textbf{0.250}} & \textcolor{red}{\textbf{0.161}} & \underline{\textcolor{blue}{0.251}} & 0.173 & 0.259 & 0.183 & 0.271 & 0.168 & 0.258 & 0.175 & 0.266 & 0.165 & 0.256 & 0.296 & 0.391 & \underline{\textcolor{blue}{0.164}} & 0.255 & 0.165 & 0.255 & 0.189 & 0.265 & 0.210 & 0.294   \\
         & 192   & \underline{\textcolor{blue}{0.217}} & \textcolor{red}{\textbf{0.288}} & \textcolor{red}{\textbf{0.214}} & \textcolor{red}{\textbf{0.288}} & 0.247 & 0.309 & 0.247 & 0.316 & 0.227 & 0.298 & 0.242 & 0.312 & 0.225 & 0.298 & 0.369 & 0.416 & 0.224 & 0.304 & 0.221 & \underline{\textcolor{blue}{0.293}} & 0.254 & 0.310 & 0.338 & 0.373   \\
         & 336   & \underline{\textcolor{blue}{0.270}} & \textcolor{red}{\textbf{0.326}} & \textcolor{red}{\textbf{0.266}} & \textcolor{red}{\textbf{0.326}} & 0.292 & 0.337 & 0.290 & 0.338 & 0.276 & 0.334 & 0.282 & 0.337 & 0.277 & 0.332 & 0.588 & 0.600 & 0.277 & 0.337 & 0.276 & \underline{\textcolor{blue}{0.327}} & 0.313 & 0.345 & 0.432 & 0.416   \\
         & 720   & \underline{\textcolor{blue}{0.359}} & \underline{\textcolor{blue}{0.382}} & \textcolor{red}{\textbf{0.353}} & \underline{\textcolor{blue}{0.382}} & 0.382 & 0.391 & 0.383 & 0.394 & 0.364 & 0.394 & 0.375 & 0.394 & 0.360 & 0.387 & 0.750 & 0.612 & 0.371 & 0.401 & 0.362 & \textcolor{red}{\textbf{0.381}} & 0.413 & 0.402 & 0.554 & 0.476   \\
         \cmidrule(lr){2-26}
       \multirow{-5}*{\rotatebox{90}{ETTm2}}  & Avg   & \underline{\textcolor{blue}{0.252}} & \textcolor{red}{\textbf{0.311}} & \textcolor{red}{\textbf{0.248}} & \underline{\textcolor{blue}{0.312}} & 0.273 & 0.324 & 0.276 & 0.330 & 0.259 & 0.321 & 0.268 & 0.327 & 0.257 & 0.318 & 0.501 & 0.505 & 0.259 & 0.324 & 0.256 & 0.314 & 0.292 & 0.331 & 0.384 & 0.390   \\
         \midrule
         & 96   & \textcolor{red}{\textbf{0.141}} & \textcolor{red}{\textbf{0.193}} & \textcolor{red}{\textbf{0.141}} & \underline{\textcolor{blue}{0.198}} & 0.150 & 0.202 & 0.182 & 0.235 & 0.147 & 0.199 & 0.157 & 0.207 & 0.147 & \underline{\textcolor{blue}{0.198}} & \underline{\textcolor{blue}{0.143}} & 0.210 & 0.170 & 0.230 & 0.150 & 0.200 & 0.168 & 0.214 & 0.188 & 0.242   \\
         & 192   & \textcolor{red}{\textbf{0.183}} & \textcolor{red}{\textbf{0.234}} & \underline{\textcolor{blue}{0.185}} & \underline{\textcolor{blue}{0.236}} & 0.196 & 0.246 & 0.251 & 0.283 & 0.188 & 0.238 & 0.200 & 0.248 & 0.192 & 0.243 & 0.195 & 0.261 & 0.216 & 0.273 & 0.191 & 0.239 & 0.219 & 0.262 & 0.240 & 0.290   \\
         & 336   & \underline{\textcolor{blue}{0.235}} & 0.291 & \textcolor{red}{\textbf{0.232}} & 0.285 & 0.247 & 0.284 & 0.305 & 0.320 & 0.239 & \textcolor{red}{\textbf{0.276}} & 0.252 & 0.287 & 0.247 & 0.284 & 0.254 & 0.319 & 0.258 & 0.307 & 0.242 & \underline{\textcolor{blue}{0.279}} & 0.278 & 0.302 & 0.322 & 0.328   \\
         & 720   & \underline{\textcolor{blue}{0.306}} & 0.345 & \textcolor{red}{\textbf{0.303}} & 0.336 & 0.358 & 0.349 & 0.373 & 0.364 & 0.316 & \textcolor{red}{\textbf{0.328}} & 0.320 & 0.336 & 0.318 & \underline{\textcolor{blue}{0.330}} & 0.335 & 0.385 & 0.323 & 0.362 & 0.312 & \underline{\textcolor{blue}{0.330}} & 0.353 & 0.351 & 0.396 & 0.378   \\
         \cmidrule(lr){2-26}
       \multirow{-5}*{\rotatebox{90}{Weather}}  & Avg   & \underline{\textcolor{blue}{0.216}} & 0.266 & \textcolor{red}{\textbf{0.215}} & 0.264 & 0.238 & 0.270 & 0.278 & 0.300 & 0.222 & \textcolor{red}{\textbf{0.260}} & 0.232 & 0.270 & 0.226 & 0.264 & 0.232 & 0.294 & 0.242 & 0.293 & 0.224 & \underline{\textcolor{blue}{0.262}} & 0.255 & 0.282 & 0.286 & 0.309   \\
         \midrule
         & 96   & \underline{\textcolor{blue}{0.132}} & 0.228 & \textcolor{red}{\textbf{0.129}} & \textcolor{red}{\textbf{0.223}} & 0.137 & 0.228 & 0.220 & 0.291 & \underline{\textcolor{blue}{0.132}} & \underline{\textcolor{blue}{0.227}} & 0.134 & 0.230 & 0.153 & 0.256 & 0.134 & 0.231 & 0.140 & 0.237 & 0.143 & 0.247 & 0.169 & 0.271 & 0.171 & 0.274   \\
         & 192   & 0.148 & \textcolor{red}{\textbf{0.240}} & \underline{\textcolor{blue}{0.147}} & \textcolor{red}{\textbf{0.240}} & 0.152 & \underline{\textcolor{blue}{0.241}} & 0.216 & 0.291 & 0.149 & \underline{\textcolor{blue}{0.241}} & 0.154 & 0.250 & 0.168 & 0.269 & \textcolor{red}{\textbf{0.146}} & 0.243 & 0.154 & 0.251 & 0.158 & 0.260 & 0.180 & 0.280 & 0.180 & 0.283   \\
         & 336   & \textcolor{red}{\textbf{0.162}} & 0.260 & \textcolor{red}{\textbf{0.162}} & 0.261 & 0.166 & \textcolor{red}{\textbf{0.256}} & 0.230 & 0.307 & \underline{\textcolor{blue}{0.165}} & \underline{\textcolor{blue}{0.258}} & 0.169 & 0.265 & 0.189 & 0.291 & \underline{\textcolor{blue}{0.165}} & 0.264 & 0.169 & 0.268 & 0.168 & 0.267 & 0.204 & 0.304 & 0.204 & 0.305   \\
         & 720   & \underline{\textcolor{blue}{0.197}} & 0.296 & 0.199 & 0.299 & 0.204 & \underline{\textcolor{blue}{0.291}} & 0.274 & 0.340 & 0.203 & 0.292 & \textcolor{red}{\textbf{0.194}} & \textcolor{red}{\textbf{0.288}} & 0.228 & 0.320 & 0.237 & 0.314 & 0.204 & 0.301 & 0.214 & 0.307 & 0.205 & 0.304 & 0.221 & 0.319   \\
         \cmidrule(lr){2-26}
       \multirow{-5}*{\rotatebox{90}{Electricity}}  & Avg   & \underline{\textcolor{blue}{0.160}} & 0.256 & \textcolor{red}{\textbf{0.159}} & 0.256 & 0.165 & \textcolor{red}{\textbf{0.254}} & 0.235 & 0.307 & 0.162 & \underline{\textcolor{blue}{0.255}} & 0.163 & 0.258 & 0.184 & 0.284 & 0.171 & 0.263 & 0.167 & 0.264 & 0.171 & 0.270 & 0.189 & 0.290 & 0.194 & 0.295   \\
         \midrule
         & 96   & 0.396 & 0.276 & 0.372 & 0.266 & 0.394 & \underline{\textcolor{blue}{0.260}} & 0.707 & 0.433 & 0.396 & 0.278 & \textcolor{red}{\textbf{0.363}} & 0.265 & \underline{\textcolor{blue}{0.369}} & \textcolor{red}{\textbf{0.257}} & 0.526 & 0.288 & 0.395 & 0.275 & 0.370 & 0.262 & 0.595 & 0.312 & 0.603 & 0.330   \\
         & 192   & 0.408 & 0.279 & 0.389 & 0.274 & \textcolor{red}{\textbf{0.380}} & \textcolor{red}{\textbf{0.250}} & 0.648 & 0.401 & 0.413 & 0.285 & \underline{\textcolor{blue}{0.384}} & 0.273 & 0.400 & 0.272 & 0.503 & \underline{\textcolor{blue}{0.263}} & 0.407 & 0.280 & 0.386 & 0.269 & 0.613 & 0.322 & 0.611 & 0.338   \\
         & 336   & 0.417 & 0.283 & \underline{\textcolor{blue}{0.398}} & 0.279 & 0.429 & 0.278 & 0.668 & 0.407 & 0.421 & 0.291 & \textcolor{red}{\textbf{0.396}} & 0.277 & 0.407 & \textcolor{red}{\textbf{0.272}} & 0.505 & 0.276 & 0.417 & 0.286 & \textcolor{red}{\textbf{0.396}} & \underline{\textcolor{blue}{0.275}} & 0.626 & 0.332 & 0.628 & 0.342   \\
         & 720   & 0.453 & 0.302 & \underline{\textcolor{blue}{0.437}} & 0.301 & 0.440 & \textcolor{red}{\textbf{0.284}} & 0.727 & 0.431 & 0.456 & 0.307 & 0.445 & 0.308 & 0.461 & 0.316 & 0.552 & 0.301 & 0.454 & 0.308 & \textcolor{red}{\textbf{0.435}} & \underline{\textcolor{blue}{0.295}} & 0.635 & 0.340 & 0.646 & 0.350   \\
         \cmidrule(lr){2-26}
       \multirow{-5}*{\rotatebox{90}{Traffic}}  & Avg   & 0.419 & 0.285 & 0.399 & 0.280 & 0.411 & \textcolor{red}{\textbf{0.268}} & 0.688 & 0.418 & 0.421 & 0.290 & \underline{\textcolor{blue}{0.397}} & 0.281 & 0.409 & 0.279 & 0.521 & 0.282 & 0.418 & 0.287 & \textcolor{red}{\textbf{0.397}} & \underline{\textcolor{blue}{0.275}} & 0.617 & 0.327 & 0.622 & 0.340   \\
         \midrule
         \multicolumn{2}{c|}{$1^{st}$ Count} & \textcolor{red}{\textbf{17}} & \textcolor{red}{\textbf{18}} & \underline{\textcolor{blue}{12}} & 4 & 2 & \underline{\textcolor{blue}{5}} & 1 & \underline{\textcolor{blue}{5}} & 0 & 3 & 3 & 1 & 0 & 2 & 1 & 0 & 0 & 0 & 3 & 1 & 0 & 0 & 0 & 0 \\
         \bottomrule[2pt]
    \end{tabular}
    }}
    \caption{Multivariate forecasting comparison across seven datasets with
forecast horizons $\tau\in\{96,192,336,720\}$, under the TFB
protocol~\cite{qiu2024tfb}. Baseline results are taken from TFB, each
reported at its best look-back length over $\{96,336,512\}$; AdaRDiff is
evaluated identically. Best result in \textbf{\textcolor{red}{red}}, second
best \underline{\textcolor{blue}{blue}}; our method in
\textcolor{OursColor}{dark red}.}
    \label{tab:multivariate_main}
\end{table*}

\end{document}